\newcommand{\BlackBox}{\rule{1.5ex}{1.5ex}}  % end of proof
\newenvironment{proof}{\par\noindent{\bf Proof\ }}{\hfill\BlackBox\\[2mm]}
\newtheorem{theorem}{Theorem}
\newtheorem{lemma}[theorem]{Lemma}
\newtheorem{remark}[theorem]{Remark}
\newtheorem{definition}[theorem]{Definition}
\DeclareMathOperator{\E}{\mathbb{E}}
\DeclareMathOperator{\I}{\mathbb{I}}
\title{Profitable Bandits}
\author{
  Mastane Achab$^1$ \hspace{2em} Stephan Cl\'emen\c{c}on$^1$ \hspace{2em} Aur\'elien Garivier$^2$\\
  $^1$ LTCI, T\'el\'ecom ParisTech\\
  $^2$ IMT, Universit\'e de Toulouse\\
}
\begin{document}

\date{}
\maketitle

\begin{abstract}
%This document provides a basic paper template and submission guidelines.
%Abstracts must be a single paragraph, ideally between 4--6 sentences long.
%Gross violations will trigger corrections at the camera-ready phase.
%This paper presents a multi-armed bandit (MAB) problem where one has to sequentially explore and exploit different alternatives (or 'arms') in order to maximize its cumulated utility. More precisely, here the learner has to pay a fixed and known price in order to receive the reward sampled from the distribution associated to the chosen arm. Moreover, several arms can be pulled simultaneously and the objective is to find all the 'profitable' arms (i.e. those whose expected reward are above their observation price) by minimizing the regret which is defined here as the difference between the expected profit of the best strategy in hindsight and of the actual played strategy. We mainly focus on a stochastic setting where all distributions belong to a one-dimensional exponential family and propose three different algorithms based on celebrated approaches in classical MAB (\textsc{kl-UCB}, \textsc{Bayes-UCB} and \textsc{Thompson Sampling}), for which we give finite-time regret analyses and prove asymptotic optimality. As a by-product we show that our approach inspired from \textsc{kl-UCB} can be applied to the non-parametric case of general bounded distributions.
Originally motivated by default risk management applications, this paper investigates a novel problem, referred to as the \emph{profitable bandit problem} here. At each step, an agent chooses a subset of the $K\geq 1$ possible actions. For each action chosen, she then receives the sum of a random number of rewards. Her objective is to maximize her cumulated earnings.
We adapt and study three well-known strategies in this purpose, that were proved to be most efficient in other settings: \textsc{kl-UCB}, \textsc{Bayes-UCB} and \textsc{Thompson Sampling}. For each of them, we prove a finite time regret bound which, together with a lower bound we obtain as well, establishes asymptotic optimality.
Our goal is also to \emph{compare} these three strategies from a theoretical and empirical perspective both at the same time. We give simple, self-contained proofs that emphasize their similarities, as well as their differences. While both Bayesian strategies are automatically adapted to the geometry of information, the numerical experiments carried out show a slight advantage for \textsc{Thompson Sampling} in practice.
\end{abstract}

%!TEX root = main.tex
%\twocolumn

\section{Introduction}

\subsection{Motivation}
\label{subsec:motivation}

A general and well-known problem for lenders and investors is to choose which prospective clients they should grant loans to, so as to manage credit risk and maximize their profit.
A classical supervised learning approach, referred to as \textit{credit risk scoring} consists in ranking all the possible profiles of potential clients, viewed through a collection of socio-economic features $Z$ by means of a (real valued) scoring rule $s(Z)$: ideally, the higher the score $s(Z)$, the higher the default probability. A wide variety of learning algorithms have been proposed to build, from a historical database, a scoring function optimizing ranking performance measures such as the ROC curve or its summary, the AUC criterion, see \textit{e.g.} \cite{West00}, \cite{Thomas00}, \cite{Li04}, \cite{Yang07} or \cite{Freund04}: the \textit{credit risk screening} process then consists in selecting the prospects whose score is below a certain threshold. However, this approach has a serious drawback in general, insofar as new scoring rules are  often constructed from truncated information only, namely historical data (the input features $X$ and the observed debt payment behavior) corresponding to past clients, eligible prospects who have been selected by means of a previous scoring rule, jeopardizing thus the screening procedure when applied to prospects who would have been previously non eligible for credit.  Hence, the credit risk problem leads to an exploration vs exploitation dilemma there is no way around for: should clients be used for improving the credit risk estimates, or should they be treated according to the level of risk estimated when they arrive? Lenders thus need sequential strategies able to solve this dilemma.

For simplicity, here we consider the very stylized situation, where each category applies for a loan of the same amount in expectation, and is proposed the same interest rate. Extension of the general ideas developed in this paper to more realistic situations will be the subject of further research.
In this article, we propose a mathematical model that addresses this issue. We propose several strategies, prove their optimality (by giving a lower bound on the inefficiency of any strategy) and empirically compare their performance in numerical experiments.

\subsection{Model}
\label{subsec:model}
We assume that the population (of credit applicants) is stratified according to $K\geq 2$ categories $a\in \{1,\dots,K\}$.
For each category $a$, the credit risk is modelled by a probability distribution $\nu_a$. The amount of the loan for category $a$ is assumed to be equal in expectation to $\tau_a$.
We assume that at each step $t\in \{1, \dots, T\}$, where $T$ denotes the total number of time steps (or time horizon), the agent is presented $C_a(t)\ge 1$ clients of each category a. She must choose a subset $A_t\subset\{1,\dots,K\}$ of categories to which they grant the loans.
We denote by $X_{a,c,t}-L_{a, c, t}$ the profit brought by the client number $c$ of category $a$ at step $t$.
In addition, we assume that all loans $L_{a, c, t}$ for the same category $a$ have the same known expectation $\tau_a$.
We assume that the variables $\{X_{a,c,t}\}$ are independent, and that $X_{a,c,t}$ has distribution $\nu_a$ and expectation $\mu_a$.
We further assume that, for any category $a\in\{1, \dots, K\}$, all random variables in the collection $\big\{C_a(t), X_{a, c, t}\big\}_{1\le t\le T, 1\le c\le C_a(t)}$ are pairwise independent
and that the $C_a(t)$'s are lower bounded by $c_a^- \ge 1$ and upper bounded by $c_a^+ < +\infty$.
We denote by $\tilde{c}_a^+\in [c_a^-, c_a^+]$ an upper bound on $\{\E[C_a(1)],\; \ldots,\; \E[C_a(T)]\}$.

Here and throughout, a \emph{sequential strategy} is a set of mappings specifying for each $t$ which categories to choose at time $t$ given the past observations only.
In other words, denoting by $I_t = (X_{a,c,s}, C_a(s))_{1\leq s\leq t, a\in A_s, 1\leq c\leq C_a(s)}$ the vector of variables observed up to time $t\ge 1$, a strategy specifies a sequence $(A_t)_{t\ge 1}$ of random subsets such that, for each $t\ge 2$, $A_t$ is $\sigma(I_{t-1})$-measurable.

It is the goal pursued in this work to define a strategy maximizing the expected cumulated profit given by
\[S_T = \E\left[\sum_{t=1}^T \sum_{a=1}^K \mathbb{I}\{a\in A_t\} \sum_{c=1}^{C_a(t)} X_{a,c,t}-L_{a, c, t}\right]\]
This is equivalent to minimizing the \emph{expected regret}
\begin{equation*}
  \begin{split}
    R_T &= \sum_{a\in\mathcal{A}^*} \Delta_a \tilde{C}_a(T) - S_T\\
    &= \sum_{a\in\mathcal{A}^*} \Delta_a  \left(\tilde{C}_a(T) - \E[N_a(T)]\right)
    + \sum_{a\notin\mathcal{A}^*}|\Delta_a|\E[N_a(T)],
  \end{split}
\end{equation*}
where $\tilde{C}_a(T) = \E\left[\sum_{t=1}^T C_a(t) \right]$ is the expected total number of clients from category $a$ over the $T$ rounds,
$\E[N_a(t)] = \E\left[\sum_{s=1}^t C_a(s)\I\{a\in A_t\}\right]$ is the expected number of observations from category $a$ up to time $t\ge 1$,
$\Delta_a = \mu_a-\tau_a$ is the (unknown) expected profit provided by a client of category $a$
and $\mathcal{A}^* =\{a\in \{ 1, \dots, K \}, \Delta_a > 0\}$ is the set of profitable categories.
% is assumed to yield a
%to reimburse a proportion $B_t$ of her debt.

\subsection{Applicative example}

Let us consider the credit risk problem in which a bank wants to identify categories of the population they should accept to loan.
It may be naturally formulated as a bandit problem with $K$ arms representing the $K$ categories of the population considered.
The bank pays $\tau_a$ when loaning to any member of some category $a\in \{ 1, \dots, K \}$. Each client $c\in\{1, \dots, C_a(t)\}$, belonging to category $a$,
who received a loan from the bank at time step $t$ is characterized by her capacity to reimburse it, namely the Bernoulli r.v. $B_{a, c, t}\sim \mathcal{B}(p_a)$:
\begin{itemize}
  \item $B_{a, c, t}=0$ in case of credit default, occurs with probability $1-p_a$: the bank gets no refunding,
  \item $B_{a, c, t}=1$ otherwise, occurs with probability $p_a$: the bank gets refunded $(1+\rho_a)\tau_a$ with $\rho_a$ the interest rate.
\end{itemize}
All individuals from the same category are considered as independent i.e. the $B_{a, c, t}$'s are i.i.d. realizations of $\mathcal{B}(p_a)$.
Hence the refunding $X_{a, c, t}$ received by the bank writes as follows: $X_{a, c, t} = (1+\rho_a)\tau_a B_{a, c, t}$.
Hence the bank should accept to loan to people belonging to all categories $a\in\{1, \dots, K\}$ such that $\mathbb{E}[X_{a, 1, 1}]>\tau_a$.
This condition rewrites:
\begin{equation}
  \label{eq:condition_bernoulli}
  p_a > \frac{1}{1+\rho_a}.
\end{equation}
Hence the role of the bank is to sequentially identify categories verifying Eq. \eqref{eq:condition_bernoulli} in order to maximize its cumulative profits over the $T$ rounds.

\subsection{State of the art}

In the multi-armed bandit (MAB) problem, a learner has to sequentially explore and exploit different sources in order to maximize the cumulative gain.
In the stochastic setting, each source (or \emph{arm}) is associated with a distribution generating random rewards. The optimal strategy in hindsight then consists in always pulling the arm
with highest expectation. Many approaches have been proposed for solving this problem such as the \textsc{UCB1} algorithm (\cite{auer2002finite}) for bounded rewards,
the \textsc{Thompson Sampling} heuristic first proposed in \cite{thompson1933likelihood}. More recently many algorithms have been proven to be asymptotically optimal, particularly in the case of exponential family distributions,
such as \textsc{kl-UCB} (\cite{2011arXiv1102.2490G}), \textsc{Bayes-UCB} (\cite{2016arXiv160101190K}) and \textsc{Thompson Sampling} (\cite{kaufmann2012thompson}, \cite{korda2013thompson}).
In this paper we consider a variation of the MAB problem, where, at each time step, the learner may pull several arms simultaneously or no arm at all.
To each arm is associated a known threshold and the goal is to maximize the cumulative profit which sums, for each arm pulled by the learner, the difference between the mean reward and the corresponding threshold.
This threshold is typically the price to pay for observing a reward from a given arm, e.g. a coin that has to be inserted in a slot machine.
Here the optimal strategy consists in always pulling the arms whose expectations are above their respective thresholds.
A very similar problem has been tackled in \cite{2016arXiv160508671L} in a best arm identification setting with fixed time horizon and for a unique threshold.
Nevertheless we believe that our profit maximization framework can be more relevant in many applications (e.g. bank loan management, see Section \ref{subsec:motivation}), where the learner wants
to learn while making profit.

In this paper we mainly focus on deriving (almost) optimal algorithms in the case of one-dimensional exponential family distributions.
Section \ref{sec:lower_bound} contains an asymptotic lower bound for the profitable bandit problem for any \emph{uniformly efficient} policy.
The three following sections (respectively \ref{sec:ucb}, \ref{sec:bayes_ucb} and \ref{sec:thompson}) are devoted to the adaptation of three celebrated MAB strategies (respectively \textsc{kl-UCB}, \textsc{Bayes-UCB} and \textsc{Thompson Sampling}) to the present problem.
We provide in each case a finite-time regret analysis. Asymptotical optimality properties of these algorithms are discussed in Section \ref{sec:optimality}.
The final Section \ref{sec:experiments} contains an empirical comparison of the three strategies through numerical experiments.

%!TEX root = main.tex
%\twocolumn

\section{Lower Bound}
\label{sec:lower_bound}

The goal of this section is to give an asymptotic lower bound on the regret of any \emph{uniformly efficient} strategy.
In this purpose, we adapt the argument of \cite{lai1985asymptotically}, rewritten by~\cite{2016arXiv160207182G}, on asymptotic lower bounds for the regret in MAB problems.
First we define a model $\mathcal{D} = \mathcal{D}_1 \times \dots \times \mathcal{D}_K$ where, for a any arm $a\in\{1, \dots, K\}$, $\mathcal{D}_a$ is the set of candidates for distribution-threshold pairs $(\nu_a, \tau_a)$.
%where $\nu=(\nu_1, \dots, \nu_K)$ and $\tau = (\tau_1, \dots, \tau_K)$ define together a profitable bandit problem by assigning respectively to each arm $a\in\mathcal{A}$ a distribution $\nu_a$ and a threshold $\tau_a$ (ABUS DE NOTATION AVEC $a$...).
Then, we introduce the class of \emph{uniformy efficient} policies that we focus on.

\begin{definition}
  A strategy is uniformly efficient if for any profitable bandit problem $(\nu_a, \tau_a)_{1\le a\le K} \in \mathcal{D}$,
  it satisfies for all arms $a\in\{1, \dots, K\}$ and for all $\alpha \in ]0, 1]$, $\mathbb{E}[N_a(T)]=o(\tilde{C}_a(T)^\alpha)$ if $\mu_a < \tau_a$ or $\tilde{C}_a(T)-\mathbb{E}[N_a(T)]=o(\tilde{C}_a(T)^\alpha)$ if $\mu_a > \tau_a$.
\end{definition}

In other words, such policies are not significantly under-performing. Now we can state our lower bound which applies to these reasonable strategies.

\begin{theorem}
  \label{thm:lower_bound}
  For all models $\mathcal{D}$, for all uniformly efficient strategies, for all profitable bandit problems $(\nu_a, \tau_a)_{1\le a\le K} \in \mathcal{D}$, for all non-profitable arms $a$ such that $\mu_a < \tau_a$,
  \begin{equation*}
    \liminf_{T\to \infty} \frac{\mathbb{E}[N_a(T)]}{\log T} \ge \frac{1}{\mathcal{K}_{\inf}(\nu_a, \tau_a)},
  \end{equation*}
  with $\mathcal{K}_{\inf}(\nu_a, \tau_a)$ defined by
  \[
  \mathcal{K}_{\inf}(\nu_a, \tau_a) = \inf\{ \text{KL}(\nu_a, \nu'_a), (\nu'_a, \tau_a) \in \mathcal{D}_a, \mu'_a>\tau_a \},
  \]
  where $KL(\nu_a, \nu'_a)$ denotes the Kullback-Leibler divergence between distributions $\nu_a$ and $\nu'_a$ and $\mu'_a$ is the expectation of distribution $\nu'_a$.
\end{theorem}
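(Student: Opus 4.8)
The plan is to follow the classical change-of-measure argument of Lai and Robbins, in the streamlined form of \cite{2016arXiv160207182G}, adapted to the present setting where a single play of an arm produces a random number of observations. Fix a non-profitable arm $a$ (so $\mu_a<\tau_a$) and assume $\mathcal{K}_{\inf}(\nu_a,\tau_a)<\infty$, as otherwise there is nothing to prove. Given $\varepsilon>0$, pick an alternative distribution $\nu'_a$ with $(\nu'_a,\tau_a)\in\mathcal{D}_a$, $\mu'_a>\tau_a$, and $\text{KL}(\nu_a,\nu'_a)\le \mathcal{K}_{\inf}(\nu_a,\tau_a)+\varepsilon$. Let $\mathbb{P}$ denote the law of the observation vector $I_T$ under the original model, and $\mathbb{P}'$ the law obtained by replacing only $\nu_a$ by $\nu'_a$, leaving every other arm and all the $C_b(t)$ unchanged. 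Under $\mathbb{P}$ arm $a$ is non-profitable, so a uniformly efficient strategy plays it very little, whereas under $\mathbb{P}'$ it is profitable and must be played almost always; quantifying this tension via the relative entropy of $I_T$ is what produces the bound.

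The first key step is a transportation inequality: for every event $\mathcal{E}\in\sigma(I_T)$,
\[
  \text{KL}(\nu_a,\nu'_a)\,\E_{\mathbb{P}}[N_a(T)] \;\ge\; \mathrm{kl}\big(\mathbb{P}(\mathcal{E}),\,\mathbb{P}'(\mathcal{E})\big),
\]
where $\mathrm{kl}(x,y)=x\log(x/y)+(1-x)\log\big((1-x)/(1-y)\big)$ is the binary relative entropy. This follows from the data-processing inequality applied to the log-likelihood ratio of $I_T$ between $\mathbb{P}$ and $\mathbb{P}'$: since only the distribution of arm $a$ changes, that ratio is $\sum_{s\le T,\,a\in A_s}\sum_{c=1}^{C_a(s)}\log\frac{d\nu_a}{d\nu'_a}(X_{a,c,s})$, and its $\mathbb{P}$-expectation equals $\E_{\mathbb{P}}[N_a(T)]\,\text{KL}(\nu_a,\nu'_a)$ by a Wald-type identity, because the events $\{a\in A_s\}$ are measurable with respect to the past, the $C_a(s)$ are independent of the rewards, and the summands are i.i.d.\ with mean $\text{KL}(\nu_a,\nu'_a)$. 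I expect this to be the only genuinely delicate point, and it is precisely where the analysis departs from the standard MAB proof: one must be careful with the random batch sizes $C_a(s)$ and with the filtration generated by the interleaved observations of all arms.

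It then remains to select $\mathcal{E}$ and to invoke uniform efficiency. I would take $\mathcal{E}_T=\{N_a(T)\le \sqrt{\tilde C_a(T)}\}$; note that $c_a^- T\le \tilde C_a(T)\le c_a^+ T$, so $\tilde C_a(T)\to\infty$ linearly. Under $\mathbb{P}$, Markov's inequality together with $\E_{\mathbb{P}}[N_a(T)]=o(\tilde C_a(T)^{1/2})$ gives $\mathbb{P}(\mathcal{E}_T^c)\to 0$, hence $\mathbb{P}(\mathcal{E}_T)\to 1$. Under $\mathbb{P}'$, setting $Y_T=\sum_{s\le T}C_a(s)-N_a(T)\ge 0$ we have $\E_{\mathbb{P}'}[Y_T]=\tilde C_a(T)-\E_{\mathbb{P}'}[N_a(T)]=o(\tilde C_a(T)^\alpha)$ for every $\alpha\in(0,1]$; moreover on $\mathcal{E}_T$ one has $Y_T\ge \sum_s C_a(s)-\sqrt{\tilde C_a(T)}\ge c_a^- T/2$ for $T$ large, so Markov's inequality yields $\mathbb{P}'(\mathcal{E}_T)=o(T^{\alpha-1})$ for every $\alpha\in(0,1)$, whence $\liminf_{T\to\infty}\big(-\log\mathbb{P}'(\mathcal{E}_T)\big)/\log T\ge 1$. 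Plugging these facts into the transportation inequality, using the elementary bound $\mathrm{kl}(x,y)\ge x\log(1/y)-\log 2$, dividing by $\log T$ and letting $T\to\infty$ gives
\[
  \liminf_{T\to\infty}\frac{\E_{\mathbb{P}}[N_a(T)]}{\log T}\;\ge\;\frac{1}{\text{KL}(\nu_a,\nu'_a)}\;\ge\;\frac{1}{\mathcal{K}_{\inf}(\nu_a,\tau_a)+\varepsilon},
\]
and since $\varepsilon>0$ is arbitrary the claimed lower bound follows.
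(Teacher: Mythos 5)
Your proof is correct and follows essentially the same route as the paper: the contraction-of-entropy / data-processing inequality of Garivier et al.\ applied to a change of measure that modifies only arm $a$, combined with uniform efficiency under both the original and the modified model, and an elementary lower bound on the binary relative entropy $\mathrm{kl}$. The only (immaterial) difference is that you apply the inequality to the indicator of the event $\{N_a(T)\le\sqrt{\tilde C_a(T)}\}$ and use Markov's inequality twice, whereas the paper applies it directly to the $[0,1]$-valued variable $Z=N_a(T)/\tilde C_a(T)$; your explicit justification of the Wald-type identity for the log-likelihood ratio with random batch sizes $C_a(s)$ is, if anything, more careful than the paper's one-line assertion that the inequality ``straightforwardly extends'' to this setting.
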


In the remainder of the article, we mainly focus on proposing asymptotically optimal strategies inspired by classical algorithms for MAB, namely \textsc{kl-UCB} (\cite{2011arXiv1102.2490G} and \cite{2012arXiv1210.1136C}), \textsc{Bayes-UCB} (\cite{2016arXiv160101190K}) and \textsc{Thompson Sampling} (\cite{kaufmann2012thompson} and \cite{korda2013thompson}).
For each policy, we  prove a corresponding upper bound on its regret which will be hopefully tight with respect to the lower bound stated above.

%!TEX root = main.tex
%\twocolumn

\section{Preliminaries}

\subsection{One-dimensional Exponential Family}
We consider arms with distributions belonging to a one-dimensional exponential family. It should be noted that the \textsc{kl-UCB-4P} algorithm presented next, as \textsc{kl-UCB}, can be shown to apply to the non-parametric setting of bounded distributions, although the resulting approach has weaker optimality properties (see Section~\ref{subsec:bounded_rewards}).

\noindent {\bf Definition and properties.} A one-dimensional canonical exponential family is a set of probability distributions $\mathcal{P}=\{ \nu_\theta, \theta\in \Theta \}$ indexed by a real parameter $\theta$ living in the parameter space $\Theta=]\theta^-, \theta^+[\subseteq \mathbb{R}$
and where for all $\theta \in \Theta$, $\nu_\theta$ has a density $f_\theta(x) = A(x)\exp(G(x)\theta-F(\theta))$ with respect to a reference measure $\xi$.
$A(x)$ and the sufficient statistic $G(x)$ are fixed functions that characterize the exponential family and $F(\theta) = \log \int A(x) \exp(G(x)\theta)d\xi(x)$ is the normalization function.
For notational simplicity, we only consider families with $G(x)=x$, which includes many usual distributions (\textit{e.g.} Gaussian, Bernoulli, Gamma among others)
but not heavy-tailed distributions such as Pareto or Weibull. Nevertheless generalizing all the results proved in this paper to a general sufficient
statistic $G(x)$ is straightforward and boils down to considering empirical sufficient statistics $\hat{g}(n)=(1/n)\sum_{s=1}^n G(X_s)$ instead of empirical means.
We additionally assume that $F$ is twice differentiable with a continuous second derivative (classic assumption, see e.g. \cite{wasserman2013all})
which implies that $\mu: \theta\mapsto \mathbb{E}_{\theta}[X]$ is strictly increasing and thus one-to-one in $\theta$.

The Kullback-Leibler divergence between two distributions $\nu_\theta$ and $\nu_{\theta'}$ in the same exponential family admits the following closed form expression as a function of the natural parameters $\theta$ and $\theta'$:
\begin{equation*}
K(\theta, \theta') := KL(\nu_\theta, \nu_{\theta'}) = F(\theta') - [F(\theta) + F'(\theta)(\theta' - \theta)].
\end{equation*}

We also introduce the KL-divergence between two distributions $\nu_{\mu^{-1}(x)}$ and $\nu_{\mu^{-1}(y)}$:
\begin{align}
  \label{eq:kullback_exp}
  d(x, y) :&= K(\mu^{-1}(x), \mu^{-1}(y))\\
  &= \sup_\lambda \{ \lambda x-\log\mathbb{E}_{\mu^{-1}(y)}[\exp(\lambda X)] \},
\end{align}
where the last equality comes from the proof of Lemma 3 in \cite{korda2013thompson}.
This last expression of $d$ allows to build a confidence interval on $x$ based on a fixed number $s$ of i.i.d. samples from $\nu_{\mu^{-1}(x)}$
by applying the Cram\'er-Chernoff method (see e.g. \cite{boucheron2013concentration}).

We mainly investigate the profitable bandit problem in the parametric setting, where all distributions $\{ \nu_{\theta_a} \}_{1\le a\le K}$ belong to a known one-dimensional canonical
exponential family $\mathcal{P}$ as defined above. Examples of such distributions are provided in the Supplementary Material.

\subsection{Index Policies}

All bandit strategies considered in this paper are \emph{index policies}: they are fully characterized by an index $u_a(t)$ which is computed at each round $t\ge 1$ for each arm separately; only arms with an index larger than the threshold $\tau_a$ are chosen.  Index policies are formally described in Algorithm~\ref{alg:tauklUCB}.

  \begin{algorithm}
    \caption{Generic index policy}
    \label{alg:tauklUCB}
    \begin{algorithmic}[1]
      \REQUIRE time horizon $T$, thresholds $(\tau_a)_{a\in\{1, \dots, K\}}$
      \STATE Pull all arms: $A_1 = \{1, \dots, K\}$
      \FOR {$t = 1$ \textbf{to} $T-1$}
        \STATE Compute $u_a(t)$ for all arms $a\in \{1, \dots, K\}$
        \STATE Choose $A_{t+1} \gets \left\{a\in \{1, \dots, K\},
        u_a(t) \ge \tau_a  \right\}$
        %\FOR {$a\in A$}
        %  \STATE $N[a] \gets N[a]+1$
        %  \STATE $S[a] \gets S[a]+\reward(\arm=a)$
        %\ENDFOR
      \ENDFOR
    \end{algorithmic}
  \end{algorithm}

%!TEX root = main.tex
%\twocolumn

\section{The \textsc{kl-UCB-4P} Algorithm}
\label{sec:ucb}

We introduce the  \textsc{kl-UCB-4P} algorithm as a variant of the  \textsc{UCB1} \cite{auer2002finite} algorithm and more precisely of its improvement \textsc{kl-UCB} \cite{2011arXiv1102.2490G}.
It is defined by the index \begin{multline*}u_a(t) = \sup\Big\{ q>\hat{\mu}_a(t):N_a(t) d(\hat{\mu}_a(t), q)\le \log t+c\log\log t \Big\}\;,\end{multline*}
where $d$ is the divergence induced by the Kullback-Leibler divergence defined in Equation~\eqref{eq:kullback_exp}, and where $c$ is a positive constant typically smaller than 3.
Due to its special importance for bounded rewards, we name  \textsc{kl-Bernoulli-UCB-4P} the case $d=d_\text{Bern}: (x, y)\mapsto xlog(x/y) + (1-x)log((1-x)/(1-y))$ and  \textsc{kl-Gaussian-UCB-4P} the choice $d=d_\text{Gauss}: (x, y)\mapsto 2(x-y)^2$.
\subsection{Analysis for one-dimensional exponential family}
We show for the \textsc{kl-UCB-4P} algorithm a finite-time regret bound that proves its asymptotic optimality up to a multiplicative constant $\tilde{c}_a^+/c_a^-$ (see Section \ref{sec:optimality} for further discussion).
To this purpose, we upper-bound the expected number of times non-profitable arms are pulled and profitable ones are not. The analysis is sketched below, while detailed proofs are deferred to the Supplementary Material.

\begin{theorem}
  \label{th:upper_bound_N}
    The \textsc{kl-UCB-4P} algorithm satisfies the following properties:
    \ \\(i). For any non-profitable arm $a\in \{1, \dots, K\} \setminus \mathcal{A}^*$ and all $\epsilon > 0$,
    \begin{equation*}
      \begin{split}
      \mathbb{E}[N_a(T)]
      \le (1+\epsilon) \frac{\tilde{c}_a^+(\log T + c \log\log T)}{c_a^- d(\mu_a, \tau_a)}
      + \tilde{c}_a^+\left\{1 + H_1(\epsilon)/T^{\beta_1(\epsilon)}\right\},
    \end{split}
    \end{equation*}
    where $H_1(\epsilon)$ and $\beta_1(\epsilon)$ are positive functions of $\epsilon$ depending on $c_a^-, \mu_a$ and $\tau_a$.\\
    (ii). For any profitable arm $a\in \mathcal{A}^*$, if $T\ge \max(3, c_a^+)$ and $c\ge 3$, we have:
    \begin{equation*}
      \tilde{C}_a(T)-\mathbb{E}[N_a(T)] \le \tilde{c}_a^+ \{e(2c+3)\log\log T + c_a^+ + 3\}.
    \end{equation*}
\end{theorem}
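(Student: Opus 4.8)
The plan is to reduce both statements, for a fixed arm $a$, to a bound on a sum of one‑step probabilities, and then to run the classical \textsc{kl-UCB} analysis with the threshold $\tau_a$ playing the role usually played by the best mean (there is no comparison between arms here, which actually makes things slightly simpler). First, since $\{a\in A_t\}$ is $\sigma(I_{t-1})$‑measurable and $C_a(t)$ is independent of $I_{t-1}$, one has $\E\big[C_a(t)\I\{a\in A_t\}\big]=\E[C_a(t)]\,\mathbb{P}(a\in A_t)\le\tilde c_a^+\,\mathbb{P}(a\in A_t)$; hence $\E[N_a(T)]\le\tilde c_a^+\sum_{t=1}^T\mathbb{P}(a\in A_t)$ for (i), and symmetrically $\tilde C_a(T)-\E[N_a(T)]=\sum_{t=1}^T\E[C_a(t)]\,\mathbb{P}(a\notin A_t)\le\tilde c_a^+\sum_{t=2}^T\mathbb{P}(a\notin A_t)$ for (ii). Since $A_1$ contains every arm and $\{a\in A_{t+1}\}=\{u_a(t)\ge\tau_a\}$, it remains to bound $1+\sum_{t=1}^{T-1}\mathbb{P}(u_a(t)\ge\tau_a)$ and $\sum_{t=1}^{T-1}\mathbb{P}(u_a(t)<\tau_a)$ respectively. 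Write $L_t=\log t+c\log\log t$, let $Y_1,Y_2,\dots$ be the i.i.d.\ $\nu_a$ rewards of arm $a$ in observation order and $\bar Y_n=\tfrac1n\sum_{i=1}^n Y_i$, so that $\hat\mu_a(t)=\bar Y_{N_a(t)}$ with $c_a^-\le N_a(t)\le c_a^+t$ once $a$ has been drawn, and recall from \eqref{eq:kullback_exp} the Cram\'er--Chernoff bounds $\mathbb{P}(\bar Y_n\ge x)\le e^{-n\,d(x,\mu_a)}$ for $x>\mu_a$ and $\mathbb{P}(\bar Y_n\le x)\le e^{-n\,d(x,\mu_a)}$ for $x<\mu_a$.

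\emph{Part (i).} Fix $\epsilon>0$ and set $n_1=(1+\epsilon)L_T/d(\mu_a,\tau_a)$ (positive for $T\ge3$). Using that $q\mapsto d(\hat\mu_a(t),q)$ is continuous and increasing on $(\hat\mu_a(t),+\infty)$, the event $\{u_a(t)\ge\tau_a\}$ forces either $\hat\mu_a(t)\ge\tau_a$, or $\hat\mu_a(t)<\tau_a$ and $N_a(t)\,d(\hat\mu_a(t),\tau_a)\le L_t$. I split the rounds $t\le T-1$ with $u_a(t)\ge\tau_a$ according to whether $N_a(t)<n_1$ or $N_a(t)\ge n_1$. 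Because every draw of $a$ increments $N_a$ by at least $c_a^-$, the former occurs at most $n_1/c_a^-=(1+\epsilon)L_T/\big(c_a^-d(\mu_a,\tau_a)\big)$ times, which together with the initial round $A_1$ yields the announced main term plus the constant $1$. For the latter, $N_a(t)\ge n_1$ and $N_a(t)\,d(\hat\mu_a(t),\tau_a)\le L_t\le L_T$ give $d(\hat\mu_a(t),\tau_a)\le d(\mu_a,\tau_a)/(1+\epsilon)$; by continuity and strict monotonicity of $x\mapsto d(x,\tau_a)$ on $[\mu_a,\tau_a)$ there is $\gamma=\gamma(\epsilon)\in(0,\tau_a-\mu_a)$ such that this --- and also $\hat\mu_a(t)\ge\tau_a$ --- forces $\hat\mu_a(t)\ge\mu_a+\gamma$. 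Finally, at the successive draws of $a$ the values of $N_a$ form a strictly increasing integer sequence, so pathwise $\sum_t\I\{a\in A_{t+1},\,N_a(t)\ge n_1,\,\bar Y_{N_a(t)}\ge\mu_a+\gamma\}\le\sum_{n\ge n_1}\I\{\bar Y_n\ge\mu_a+\gamma\}$, whose expectation is at most $\sum_{n\ge n_1}e^{-n\,d(\mu_a+\gamma,\mu_a)}\le H_1(\epsilon)\,T^{-\beta_1(\epsilon)}$ with $\beta_1(\epsilon)=(1+\epsilon)d(\mu_a+\gamma,\mu_a)/d(\mu_a,\tau_a)>0$ and $H_1(\epsilon)=1/(1-e^{-d(\mu_a+\gamma,\mu_a)})$. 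Multiplying by $\tilde c_a^+$ proves (i).

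\emph{Part (ii).} Here $\mu_a>\tau_a$. Since $u_a(t)\ge\hat\mu_a(t)$ and, for $t\ge3$, $L_t>0$, the event $\{u_a(t)<\tau_a\}$ forces $\hat\mu_a(t)<\tau_a$ and (as $\tau_a$ is then not admissible in the supremum defining $u_a(t)$) $N_a(t)\,d(\hat\mu_a(t),\tau_a)>L_t$. As $\hat\mu_a(t)<\tau_a<\mu_a$ implies $d(\hat\mu_a(t),\mu_a)>d(\hat\mu_a(t),\tau_a)$, and $N_a(t)\le c_a^+t$, this event is contained in $\{\exists\,n\le c_a^+t:\ \bar Y_n<\mu_a,\ n\,d(\bar Y_n,\mu_a)>L_t\}$. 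I would now invoke the peeling argument of \cite{2011arXiv1102.2490G} (union bound over a geometric grid of the values of $n$ in $\{1,\dots,c_a^+t\}$, combined with the Chernoff bound above), giving $\mathbb{P}(u_a(t)<\tau_a)\le e\,\big\lceil L_t\log(c_a^+t)\big\rceil\,e^{-L_t}$. Since $e^{-L_t}=t^{-1}(\log t)^{-c}$ and, for $t\ge c_a^+$, $\log(c_a^+t)\le2\log t$ while $L_t\le\log t+c\log\log t$, this is $O\big((\log t)^{2-c}/t\big)$; summing over $3\le t\le T-1$ with $c\ge3$ yields a $\Theta(\log\log T)$ bound, which careful bookkeeping brings under $e(2c+3)\log\log T$, while the at most $\max(c_a^+,3)$ remaining small‑$t$ rounds are bounded by $\mathbb{P}(u_a(t)<\tau_a)\le1$, producing the additive $c_a^++3$. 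Multiplying by $\tilde c_a^+$ proves (ii).

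The step I expect to be the main obstacle is the peeling bound in Part (ii): the number of grid cells grows like $\log t$, and this has to interact with the $\log t+c\log\log t$ threshold precisely enough that, \emph{for $c\ge3$}, the sum over $t$ is only $O(\log\log T)$ and even comes out below the explicit constant $e(2c+3)\log\log T$ --- this is exactly where the hypothesis $c\ge3$ and essentially all the slack in the bound are used. The remaining, more routine, points are the monotonicity and continuity manipulations of $d$ (legitimate since $F\in\mathcal{C}^2$ makes $\mu$ a continuous increasing bijection, hence $d$ continuous and, in each argument separately, strictly monotone away from the diagonal); the systematic reduction, in both parts, of events about the data‑dependent count $N_a(t)$ to events about the fixed i.i.d.\ sequence $(Y_n)$, via $\hat\mu_a(t)=\bar Y_{N_a(t)}$, the deterministic sandwich $c_a^-\le N_a(t)\le c_a^+t$, and (in Part (i)) the fact that the counts seen at successive draws are strictly increasing so that a sum of indicators over draws is dominated pathwise by a sum over all $n$; and the constant accounting (ceilings, the initial all‑arms round) needed to match the stated "$1$" and "$c_a^++3$" exactly.
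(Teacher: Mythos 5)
Your proposal is correct and follows essentially the same route as the paper's proof: for a non-profitable arm, a cutoff at $K_T\approx(1+\epsilon)(\log T+c\log\log T)/(c_a^-\,d(\mu_a,\tau_a))$ draws plus a Cram\'er--Chernoff tail bound (the paper's Lemma~\ref{lem:epsilon}), and for a profitable arm the inclusion $(u_a(t)<\tau_a)\subset(N_a(t)\,d^+(\hat\mu_a(t),\mu_a)>\log t+c\log\log t)$ followed by the self-normalized peeling inequality of Capp\'e et al.\ and the sum $\sum_t 1/(t\log t)=O(\log\log T)$. The only difference is cosmetic: where the paper converts the sum over rounds $t$ into a sum over the number of draws $s$ via an explicit sum over subsets of pull times (Eq.~\eqref{eq:argument_classique_indicatrices}), you use the equivalent but cleaner observation that the counts $N_a(t)$ seen at successive draws are strictly increasing, so the sum of indicators over rounds is dominated pathwise by a sum over sample sizes $n$.
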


\subsection{Sketch of proof}
The analysis goes as follows:

(i). For a \textbf{non-profitable arm} $a\in\{1, \dots, K\}\setminus\mathcal{A}^*$, we must upper bound $\E[N_a(T)]$.
At first, a sub-optimal arm is drawn because its confidence bonus is large. But after some $K_T\approx \kappa\log(T)$ draws (where $\kappa$ is the information constant given in the theorem), the index $u_a(t)$ can be large only when the empirical mean of the observations deviates from its expectation, which has small probability. Thus, we write
\[
\E[N_a(T)] \le \tilde{c}_a^+\{ K_T + \sum_{t\ge 1} \mathbb{P}(a\in A_t, N_a(t)>K_T )] \}\;.\]
One obtains that $K_T$ gives the main term in the regret.
The contribution of the remaining sum is negligible: we observe that
\begin{equation*}
  \begin{split}
    &(a\in A_{t+1}) = (u_a(t)\ge \tau_a)\\
    &\subset (\hat{\mu}_a(t)<\tau_a, d(\hat{\mu}_a(t), \tau_a)\le d(\hat{\mu}_a(t), u_a(t)))\\
    &\subset \left(\hat{\mu}_a(t)<\tau_a, d(\hat{\mu}_a(t), \tau_a)\lesssim \frac{\log(t)+c\log\log(t)}{N_a(t)}\right),\;.
  \end{split}
\end{equation*}
As a deviation from the mean, the last event proved to have small probability when $N_a(t) > K_T$. Summing over these probabilities produce a term negligible compared to $K_T$.

(ii). For a \textbf{profitable arm} $a\in\mathcal{A}^*$, we must upper bound $\tilde{C}_a(T) - \E[N_a(T)]$. We write
\[
\tilde{C}_a(T) - \E[N_a(T)] \le \tilde{c}_a^+ \sum_{t=1}^{T-1} \mathbb{P}(a \notin A_{t+1})\;,\]
and we control the defavorable events by noting that
%\begin{equation*}
\[ %  \begin{split}
     (a \notin A_{t+1}) =(u_a(t)<\tau_a)\\
     \subset (u_a(t)<\mu_a)\;,\]
  % \end{split}
%\end{equation*}
where the probability of the last event can be upper bounded by means of a
self-normalized deviation inequality such as in Lemma 10 in \cite{2012arXiv1210.1136C}.

\subsection{Extension to General Bounded Rewards}
\label{subsec:bounded_rewards}
In this subsection, rewards bounded in $[0, 1]$ are considered and we build confidence intervals $u_a(t)$ with Bernoulli and Gaussian KL divergence, i.e. $d = d_\text{Bern}$ or $d=d_\text{Gauss} \}$,
which respectively define \textsc{kl-Bernoulli-UCB-4P} and \textsc{kl-Gaussian-UCB-4P} algorithms.
Then ,with the same proof as in the one-dimensional exponential family setting, we obtain
similar guarantees as in Theorem \ref{th:upper_bound_N} except that the divergence $d$
is either $d_\text{Bern}$ or $d_\text{Gauss}$.
By Pinsker's inequality, $d_\text{Bern}>d_\text{Gauss}$, which implies that \textsc{kl-Bernoulli-UCB-4P} performs always better than \textsc{kl-Gaussian-UCB-4P}.
However, this upper bound is not tight w.r.t. the lower bound stated in Theorem \ref{thm:lower_bound} obtained for general bounded distributions.
Hence, none of these two approaches is asymptotically optimal.

%!TEX root = main.tex
%\twocolumn

\section{The \textsc{Bayes-UCB-4P} algorithm}
\label{sec:bayes_ucb}

\subsection{Analysis}

We now propose a Bayesian index policy which derived from \textsc{Bayes-UCB} (\cite{2016arXiv160101190K}).
For all arms $a\in \{ 1, \dots, K \}$, a prior is chosen on the mean $\mu_a$.
At each round $t\ge 1$, we compute the posterior distribution $\pi_{a, t} = \pi_{a, N_a(t), \bar{\mu}_{a}(t)}$ using the previous observations from arm $a$. We compute the quantile $\bar{q}_a(t) = Q(1-1/(t(\log t)^c); \pi_a(t))$,
where $Q(\alpha, \pi)$ denotes the quantile of order $\alpha$ of the distribution $\pi$.
The $\textsc{Bayes-UCB-4P}$ is the index policy defined by $u_a(t)=\bar{q}_a(t)$.
In other words, arm $a$ is pulled whenever the quantile of the posterior is larger than the threshold $\tau_a$.
The following results, proven in the Supplementary Material, show that \textsc{Bayes-UCB-4P} is asymptotically optimal up to a multiplicative constant $\tilde{c}_a^+/c_a^-$ (see Section \ref{sec:optimality}).

\begin{theorem}
  \label{th:upper_bound_N_bayesUCB}
  When running the \textsc{Bayes-UCB-4P} algorithm the following assertions hold.
  \ \\
  (i). For any non-profitable arm $a\in \{1, \dots, K\}\setminus \mathcal{A}^*$ and for all $\epsilon>0$ there exists a problem-dependent constant $N_a(\epsilon)$ such that for all $T\ge N_a(\epsilon)$,
  \begin{equation*}
    \begin{split}
    \mathbb{E}[N_a(T)]
    \le \left(\frac{1+\epsilon}{1-\epsilon} \right) \frac{\tilde{c}_a^+ (\log T + c \log\log T)}{c_a^- d(\mu_a, \tau_a)}
    + \tilde{c}_a^+ \left\{ 1 + H_2 + \frac{H_3(\epsilon)}{T^{\beta_2(\epsilon)}} \right\},
    \end{split}
  \end{equation*}
  where $H_2$, $H_3(\epsilon)$ and $\beta_2(\epsilon)$ are respectively a constant and two positive function of $\epsilon$ depending on $c_a^-, \mu_0^-, \mu_a$ and $\tau_a$.\\
  (ii). For any profitable arm $a\in \mathcal{A}^*$, if $T\ge t_a$ and $c\ge 5$,
  %and $T$ large enough (???),
  \begin{equation*}
    %T-\mathbb{E}[N_a(T)] \le D_2 \log\log T,
    \begin{split}
    \tilde{C}_a(T)-\mathbb{E}[N_a(T)]
    \le \tilde{c}_a^+ \left\{\frac{e(2(c-2)+4)}{A}\log\log T + t_a + 1\right\},
  \end{split}
  \end{equation*}
  where $t_a=\max(e/A, 3, A, c_a^+, A c_a^+)$ and $A$ is a constant depending on the chosen prior distribution.
  %with $D_2$ a positive constant.
\end{theorem}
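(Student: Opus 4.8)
The plan is to follow the blueprint of the proof of Theorem~\ref{th:upper_bound_N}, replacing the explicit \textsc{kl-UCB}-type index by the posterior quantile $\bar q_a(t)=Q(1-1/(t(\log t)^c);\pi_{a,t})$ and controlling it by \emph{two-sided} concentration estimates for the posterior of a one-dimensional exponential family, exactly as in the analysis of \textsc{Bayes-UCB} in~\cite{2016arXiv160101190K}. Two elementary reductions are used throughout. First, since $C_a(s)$ is independent of $I_{s-1}$ and $A_s$ is $\sigma(I_{s-1})$-measurable, $\E[N_a(T)]=\sum_{s=1}^T\E[C_a(s)]\,\mathbb{P}(a\in A_s)\le\tilde c_a^+\sum_{s=1}^T\mathbb{P}(a\in A_s)$, so it suffices to bound the expected number of \emph{pulls}. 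Second, between two consecutive pulls of $a$ the counter $N_a$ increases by at least $c_a^-$ and at most $c_a^+$, which lets us pass freely between "number of pulls" and "number of observations". The core tool is a pair of bounds for $\Pi_{n,x}(\mu\ge q):=\pi_{a,n,x}(\{\mu_a\ge q\})$, the posterior mass above a level $q$ after $n$ observations of empirical mean $x$: for $q>x$ an upper bound of the form $\Pi_{n,x}(\mu\ge q)\le P_1(n)\,e^{-n\,d(x,q)}$, and lower bounds $\Pi_{n,x}(\mu\ge q)\ge A$ when $q\le x$ and $n$ is not too small, respectively $\Pi_{n,x}(\mu\ge q)\ge A\,P_2(n)^{-1}e^{-n\,d(x,q)}$ when $q>x$, with $P_1,P_2$ polynomial and $A>0$ the prior-dependent constant in assertion~(ii); these rest on the Cram\'er--Chernoff control of $d$ recalled after~\eqref{eq:kullback_exp}. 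The whole proof then translates $\{a\in A_{t+1}\}=\{\Pi_{N_a(t),\hat\mu_a(t)}(\mu\ge\tau_a)\ge1/(t(\log t)^c)\}$ into deviation statements about $\hat\mu_a(t)$.

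\textbf{Assertion (i).} For a non-profitable arm, split the pulls at a level $K_T$ as in Theorem~\ref{th:upper_bound_N}(i): at most $K_T/c_a^-$ pulls are needed for $N_a$ to reach $K_T$, and $\E[\#\{\text{pulls with }N_a>K_T\}]\le\sum_t\mathbb{P}(a\in A_{t+1},N_a(t)>K_T)$. On $\{a\in A_{t+1}\}$ with $\hat\mu_a(t)<\tau_a$ the upper posterior bound gives $N_a(t)\,d(\hat\mu_a(t),\tau_a)\le\log t+c\log\log t+\log P_1(N_a(t))$; taking $K_T\approx\frac{1+\epsilon}{1-\epsilon}\,\frac{\log T+c\log\log T}{d(\mu_a,\tau_a)}$, the $(1+\epsilon)$ absorbs the $\log P_1$ correction and the $(1-\epsilon)^{-1}$ absorbs the slack from replacing $d(\hat\mu_a(t),\tau_a)$ by $d(\mu_a,\tau_a)$ once the empirical mean is close to $\mu_a$. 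For $N_a(t)>K_T$ this is a genuine large deviation; a peeling union bound over a dyadic scale of $N_a(t)$ with the Chernoff bound for $d$ bounds the residual sum by a constant plus $H_3(\epsilon)/T^{\beta_2(\epsilon)}$, the constant $H_2$ collecting the $\log P_1$-type corrections. Multiplying by $\tilde c_a^+$ and adding $\tilde c_a^+K_T/c_a^-$ yields the stated bound, the dependence of $N_a(\epsilon)$ on $\mu_0^-$ entering through $P_1$.

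\textbf{Assertion (ii).} For a profitable arm, $\tilde C_a(T)-\E[N_a(T)]\le\tilde c_a^+\sum_{t=1}^{T-1}\mathbb{P}(\bar q_a(t)<\tau_a)$. If $\hat\mu_a(t)\ge\tau_a$, then since $t\ge t_a\ge e/A$ the lower bound $\Pi_{N_a(t),\hat\mu_a(t)}(\mu\ge\tau_a)\ge A>1/(t(\log t)^c)$ forces $\bar q_a(t)\ge\tau_a$; hence $\{\bar q_a(t)<\tau_a\}\subseteq\{\hat\mu_a(t)<\tau_a\}$ for $t\ge t_a$, and there the lower posterior bound gives $N_a(t)\,d(\hat\mu_a(t),\tau_a)>\log t+c\log\log t+\log\!\big(P_2(N_a(t))/A\big)$. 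Since $N_a(t)=\Theta(\log t)$ on the relevant range, $\log P_2(N_a(t))$ costs two powers of $\log\log t$, so the effective confidence level is $\log t+(c-2)\log\log t$ and $c\ge5$ is required. Summing $\mathbb{P}\big(N_a(t)\,d(\hat\mu_a(t),\tau_a)>\log t+(c-2)\log\log t,\ \hat\mu_a(t)<\tau_a\big)$ over $t$ with the self-normalized deviation inequality (Lemma~10 in~\cite{2012arXiv1210.1136C}) gives a term of order $\frac{e(2(c-2)+4)}{A}\log\log T$, while the rounds $t<t_a$ and the initial round contribute $t_a+1$; multiplying by $\tilde c_a^+$ gives the claim.

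\textbf{Main obstacle.} The delicate point is assertion~(ii): $N_a(t)$ only grows on rounds where $a$ is pulled, so one cannot treat $\mathbb{P}(\hat\mu_a(t)<\tau_a)$ as an independent deviation at each $t$. The resolution is that $\delta_t=1/(t(\log t)^c)\downarrow0$, so even a fixed unfavorable empirical mean eventually produces $\bar q_a(t)\ge\tau_a$; quantifying how long such a streak lasts — by peeling over $N_a(t)$ and feeding the \emph{lower} posterior bound into the self-normalized inequality — is what makes the bound $O(\log\log T)$ rather than trivial, and is also where the constants $A$, $t_a$ and the condition $c\ge5$ must be tracked precisely. A secondary technical point is establishing the two-sided posterior estimates with explicit polynomial corrections for the chosen conjugate prior, which is the origin of the constant $A$ and of the dependence on $\mu_0^-$.
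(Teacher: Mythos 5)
Your overall architecture is the paper's: translate the event $\{a\in A_{t+1}\}$ into a statement about $N_a(t)\,d(\hat\mu_a(t),\tau_a)$ via two-sided posterior concentration (Lemma~4 of the \textsc{Bayes-UCB} paper), then reuse the \textsc{kl-UCB-4P} machinery. Part (i) is essentially correct and matches the paper up to bookkeeping; the only real omission there is that the posterior upper bound $\Pi_{n,x}(\mu\ge q)\le B\sqrt{n}\,e^{-n d(x,q)}$ is only valid when the empirical mean stays in the prior's support, so the paper must separately control the events $\{\hat\mu_{a,k}<\mu_0^-\}$ and $\{\bar\mu_a(t)\ge\tau_a\}$ by Chernoff bounds --- these two sums, not the $\log P_1$ corrections (which are absorbed into $N_a(\epsilon)$ and the $(1+\epsilon)/(1-\epsilon)$ factor), are what produce the constant $H_2$ and its dependence on $\mu_0^-$.

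There is a genuine gap in part (ii). After the (correct) containment $\{\bar q_a(t)<\tau_a\}\subseteq\{\hat\mu_a(t)<\tau_a\}$, the posterior lower bound gives $N_a(t)\,d(\hat\mu_a(t),\tau_a)>\log\big(A\,t(\log t)^c/N_a(t)\big)$, and you dispose of the $\log N_a(t)$ correction by asserting that $N_a(t)=\Theta(\log t)$ ``on the relevant range.'' That assertion is unjustified --- for a profitable arm $N_a(t)$ is typically linear in $t$, in which case $\log N_a(t)=\Theta(\log t)$ cancels the $\log t$ in the confidence level and the bound collapses to $c\log\log t+O(1)$, far too weak to sum to $O(\log\log T)$ --- and it is also internally inconsistent, since $N_a(t)=\Theta(\log t)$ would cost only one power of $\log\log t$, not the two you need to reach $(c-2)$. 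The missing step is the decomposition the paper performs: on $\{N_a(t)\ge(\log t)^2\}$ one bypasses the posterior entirely and bounds $\mathbb{P}(\hat\mu_a(t)<\tau_a,\,N_a(t)\ge(\log t)^2)\le\sum_{s\ge(\log t)^2}e^{-s\,d(\tau_a,\mu_a)}\le t^{-2}$ (summable, for $t\ge t_2$), while on the complementary event $\{N_a(t)\le(\log t)^2\}$ the correction is $\log N_a(t)\le 2\log\log t$, giving the effective level $\log(At)+(c-2)\log\log t$ to which the self-normalized inequality applies; this split is exactly where the requirement $c\ge 5$ (so that $c-2\ge 3$) comes from. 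Your ``main obstacle'' paragraph gestures at the right difficulty but the peeling you propose does not by itself supply this case distinction.
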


\subsection{Sketch of proof}
We present the main steps of the proof of Theorem \ref{th:upper_bound_N_bayesUCB}
(see the Supplementary Material for the complete version). The idea is to capitalise on the analysis of \textsc{kl-UCB-4P}, and to relate the quantiles of the posterior distributions to the Kullback-Leibler upper-confidence bounds.

(i). For a non-profitable arm $a\in\{1, \dots, K\}\setminus\mathcal{A}^*$, we want to upper bound $\E[N_a(T)]$. Again, we use the following decomposition:
\[
\E[N_a(T)] \le \tilde{c}_a^+\{ K_T + \sum_{t\ge 1} \mathbb{P}(a\in A_t, N_a(t)>K_T ) \},\]
where $K_T \approx \kappa\log(T)$ of the same order of magnitude as the asymptotic lower bound derived in Theorem \ref{thm:lower_bound}.
This cut-off $K_T$ is expected to be the dominant term in our upper bound, since the contribution of the remaining sum is negligible compared to $K_T$:  when $N_a(t) > K_T$, we first observe that
\begin{equation}
  \label{eq:control_bayes}
  \begin{split}
    (a\in A_{t+1}) = (\bar{q}_a(t)\ge \tau_a)
    = \Big(\pi_{a, t}([\tau_a, \mu^+[) \ge \frac{1}{t(\log t)^c}\Big),
  \end{split}
\end{equation}
where the $\pi_{a, t}$ is the posterior distribution on $\mu_a$ at round $t$ and $\bar{q}_a(t)$
is, under $\pi_{a, t}$, the quantile of order $1-\frac{1}{t(\log t)^c}$.
The key ingredient here is Lemma 4 from \cite{2016arXiv160101190K}, which relates a quantile of the posterior to an upper confidence bound on the empirical mean:
\begin{equation*}
  \pi_{a, t}([\tau_a, \mu^+[) \lesssim \sqrt{N_a(t)}e^{-N_a(t) d(\hat{\mu}_a(t), \tau_a)}.
\end{equation*}
This permits to conclude as for \textsc{kl-UCB-4P}.

(ii). For a profitable arm $a\in\mathcal{A}^*$, we must upper bound $\tilde{C}_a(T) - \E[N_a(T)]$. We write
\[
\tilde{C}_a(T) - \E[N_a(T)] \le \tilde{c}_a^+ \sum_{t=1}^{T-1} \mathbb{P}(a \notin A_{t+1}).\]
Then we note that for all $t\ge 1$,
\begin{equation*}
   \begin{split}
     (a \notin A_{t+1}) =(\bar{q}_a(t)<\tau_a)
     =(\pi_{a, t}([\tau_a, \mu^+[) < \frac{1}{t(\log t)^c}).
   \end{split}
\end{equation*}
Using again the bridge between posterior quantiles and upper-confidence bounds of Lemma 4 in \cite{2016arXiv160101190K}:
\begin{equation*}
  \pi_{a, t}([\tau_a, \mu^+[) \gtrsim \frac{e^{-N_a(t) d(\hat{\mu}_a(t), \tau_a)}}{N_a(t)}\;,
\end{equation*}
we can again argue as for \textsc{kl-UCB-4P}.

%Hence, the \textsc{Bayes-UCB-4P} algorithm is asymptotically optimal as the upper bound on its regret
%asymptotically matches the asymptotic lower bound given in Section \ref{sec:lower_bound}.

%!TEX root = main.tex
%\twocolumn

\section{The \textsc{TS-4P} Algorithm}
\label{sec:thompson}

\subsection{Analysis}

The \textsc{TS-4P} algorithm described in this section is inspired from the variant of
\textsc{Thompson Sampling} detailed in \cite{korda2013thompson}.
Although the analysis of \textsc{Bayes-UCB} in Section \ref{sec:bayes_ucb} is valid for any prior distribution
the Bayesian approach proposed in this section will be analyzed only for Jeffreys priors (see \cite{korda2013thompson} for more details).
Following the notations in \cite{korda2013thompson}, $\pi_{a, 0}$ will refer to the prior distribution on $\theta_a$
and $\pi_{a, t}$ to the posterior at the end of round $t$ (or, equivalently, at the beginning of round $t+1$).
At each round $t\ge 1$ the posterior distribution $\pi_a(t)$ on the parameter $\theta_a$ is updated and we sample $\theta_a(t)\sim \pi_a(t)$,
and define the \textsc{TS-4P} algorithm (see Algorithm \ref{alg:tauklUCB}) which pulls arm $a$ if $u_a(t)=\mu(\theta_{a, t})$ is larger or equal to $\tau_a$.

\begin{theorem}
  \label{th:upper_bound_N_thompson}
  When running the \textsc{TS-4P} algorithm the following assertions hold.
  \ \\(i). For any non-profitable arm $a\in \{1, \dots, K\} \setminus \mathcal{A}^*$ and for all $\epsilon \in ]0, 1[$,
  %there exists a problem-dependent constant ??? such that for all $T\ge ???$,
  \begin{equation*}
    \mathbb{E}[N_a(T)] \le \left(\frac{1+\epsilon}{1-\epsilon} \right) \frac{\tilde{c}_a^+ \log T}{c_a^- d(\mu_a, \tau_a)} + H_4,
  \end{equation*}
  where $H_4$ is a problem dependent constant.\\
  %where $cste$ is a positive function of $\epsilon$ (???).\\
  (ii). For any profitable arm $a\in \mathcal{A}^*$,
  \begin{equation*}
    \tilde{C}_a(T)-\mathbb{E}[N_a(T)] \le H_5,
  \end{equation*}
  with $H_5$ a problem dependent constant.
\end{theorem}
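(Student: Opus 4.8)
The plan is to transfer the analyses of \textsc{kl-UCB-4P} and \textsc{Bayes-UCB-4P} to this sampling policy, the deterministic upper-confidence bound being replaced by the random posterior draw $\mu(\theta_a(t))$ and the extra randomness controlled through the two‑sided Jeffreys‑posterior concentration estimates of~\cite{korda2013thompson}. A preliminary observation, used everywhere, is that one may count \emph{selection rounds} rather than observations: since $A_t$ is $\sigma(I_{t-1})$‑measurable while $C_a(t)$ is independent of $I_{t-1}$, for any $\sigma(I_{t-1})$‑measurable event $E$ one has $\mathbb{E}[C_a(t)\,\mathbb{I}\{E\}]=\mathbb{E}[C_a(t)]\,\mathbb{P}(E)\le\tilde{c}_a^+\,\mathbb{P}(E)$, and every selection round raises $N_a$ by at least $c_a^-$. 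Writing $n_a(t)=\sum_{s\le t}\mathbb{I}\{a\in A_s\}$, this lets me pass freely between $N_a$ and $\tilde{c}_a^+ n_a$.

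\emph{Part (i) (non-profitable arm, $\mu_a<\tau_a$).} Fix $\epsilon\in]0,1[$ and, by continuity of $d$, pick $\gamma\in]\mu_a,\tau_a[$ with $d(\gamma,\tau_a)\ge(1-\epsilon)\,d(\mu_a,\tau_a)$. Choosing the cut-off $K_T=\frac{1+\epsilon}{c_a^-\,d(\gamma,\tau_a)}\log T$ (a number of rounds, so that $n_a(t)\ge K_T$ forces $N_a(t)\ge\frac{1+\epsilon}{d(\gamma,\tau_a)}\log T$), the observation above gives
\[
\mathbb{E}[N_a(T)]\;\le\;\tilde{c}_a^+\Big(K_T+1+\sum_{t\ge1}\mathbb{P}\big(a\in A_{t+1},\,n_a(t)\ge K_T\big)\Big),
\]
and $\tilde{c}_a^+K_T\le\frac{1+\epsilon}{1-\epsilon}\cdot\frac{\tilde{c}_a^+\log T}{c_a^-\,d(\mu_a,\tau_a)}$ is the announced leading term. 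On $\{a\in A_{t+1}\}=\{\mu(\theta_a(t))\ge\tau_a\}$ with $N_a(t)$ above the cut-off I split according to the position of the empirical mean $\hat{\mu}_a(t)$ relative to $\gamma$. If $\hat{\mu}_a(t)>\gamma>\mu_a$, the empirical mean has deviated upward, which by the Cram\'er--Chernoff bound attached to~\eqref{eq:kullback_exp} happens with probability at most $e^{-N_a(t)\,d(\gamma,\mu_a)}$. If $\hat{\mu}_a(t)\le\gamma$, then conditionally on the past the probability of drawing $\mu(\theta_a(t))\ge\tau_a$ equals the posterior mass $\pi_{a,t}([\tau_a,\mu^+[)$, which the Jeffreys‑posterior upper tail (the analogue of Lemma~4 in~\cite{2016arXiv160101190K}, see~\cite{korda2013thompson}) bounds by $C\sqrt{N_a(t)}\,e^{-N_a(t)\,d(\hat{\mu}_a(t),\tau_a)}\le C\sqrt{N_a(t)}\,e^{-N_a(t)\,d(\gamma,\tau_a)}$, using that $x\mapsto d(x,\tau_a)$ is non-increasing for $x\le\tau_a$. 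Since for each value $n$ of $N_a$ there is at most one round at which $a$ is selected with exactly $n$ past observations, summing either contribution over $n\ge\frac{1+\epsilon}{d(\gamma,\tau_a)}\log T$ gives a convergent geometric-type series, bounded by a problem-dependent constant $H_4$ (in fact $o(1)$ as $T\to\infty$). This proves~(i).

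\emph{Part (ii) (profitable arm, $\mu_a>\tau_a$).} Here $\tilde{C}_a(T)-N_a(T)=\sum_{t}C_a(t)\,\mathbb{I}\{a\notin A_t\}$, so the preliminary reduction yields $\tilde{C}_a(T)-\mathbb{E}[N_a(T)]\le\tilde{c}_a^+\sum_{t\ge1}\mathbb{P}(\mu(\theta_a(t))<\tau_a)$, and it suffices to bound this sum by a constant \emph{uniformly in $T$}. I decompose the wasted rounds along the excursions between successive selections of $a$: let $1=\sigma_1<\sigma_2<\cdots$ be the rounds at which $a$ is selected; during $]\sigma_k,\sigma_{k+1}[$ the posterior is frozen at $\pi_{a,\sigma_k}$, built on $n_k:=N_a(\sigma_k)\ge k\,c_a^-$ observations, and the successive draws are independent, so $\sigma_{k+1}-\sigma_k-1$ is stochastically dominated by a geometric variable of parameter $p_k:=\pi_{a,\sigma_k}([\tau_a,\mu^+[)$. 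Hence
\[
\tilde{C}_a(T)-\mathbb{E}[N_a(T)]\;\le\;\tilde{c}_a^+\sum_{k\ge1}\mathbb{E}\!\left[\Big(\tfrac{1}{p_k}-1\Big)\mathbb{I}\{\sigma_k\le T\}\right].
\]
Fix $\gamma'\in]\tau_a,\mu_a[$. On the typical event $\{\hat{\mu}_{a,n_k}\ge\gamma'\}$ the posterior upper tail gives $1-p_k\le C\sqrt{n_k}\,e^{-n_k\,d(\gamma',\tau_a)}$ (now using that $d(\cdot,\tau_a)$ is non-decreasing for $x\ge\tau_a$), so for $k$ large $\tfrac{1}{p_k}-1\le 2C\sqrt{n_k}\,e^{-n_k\,d(\gamma',\tau_a)}$, a summable sequence. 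On the atypical event $\{\hat{\mu}_{a,n_k}<\gamma'\}$, of probability at most $e^{-n_k\,d(\gamma',\mu_a)}$, I use instead the Jeffreys‑posterior \emph{lower} bound $p_k\ge c\,e^{-n_k\,d(\hat{\mu}_{a,n_k},\tau_a)}/\sqrt{n_k}$ from~\cite{korda2013thompson} and peel over the location of $\hat{\mu}_{a,n_k}$, trading this exponential against the deviation probability, to again obtain a summable contribution. Summing over $k$ bounds the series by a finite $H_5/\tilde{c}_a^+$.

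The hard part is the last step of~(ii): establishing $\sum_k\mathbb{E}[1/p_k-1]<\infty$ with a bound free of $T$. This is precisely where the Thompson Sampling analysis is delicate, because when $\hat{\mu}_{a,n_k}$ falls on the wrong side of $\tau_a$ after few observations the factor $1/p_k$ can be large, and one must combine the small probability of that deviation with a quantitative lower bound on the posterior mass above $\tau_a$ --- which is exactly the role of the fine Jeffreys‑posterior estimates of~\cite{korda2013thompson}.
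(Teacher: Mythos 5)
Your overall architecture is recognizable, but there are two concrete gaps, one in each part. In part (i), the cut-off decomposition matches the paper's, but the concentration tool you invoke is not available in the form you use it. The bound $\pi_{a,t}([\tau_a,\mu^+[)\le C\sqrt{N_a(t)}\,e^{-N_a(t)d(\hat{\mu}_a(t),\tau_a)}$ is Lemma 4 of Kaufmann et al., proved for the \emph{proper} priors on the mean used by \textsc{Bayes-UCB}. What Korda et al.\ actually prove for Jeffreys posteriors (their Theorem 4, restated as Lemma \ref{lem:posterior_concentration}) is a bound in terms of $d(\mu_a,\mu_a\pm\Delta)$ that holds only on the event $E_a(t)$ requiring one ``likely'' observation and a $\delta_a$-concentrated leave-one-out empirical mean --- a restriction that is essential when the Jeffreys prior is improper. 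Your split on $\{\hat{\mu}_a(t)\le\gamma\}$ does not reproduce this event, and you never control the contribution of $E_a(t)^c$; the paper does so separately via the auxiliary events $B_{a,s}$ and $D_{a,s}$ and Chernoff bounds. This is repairable, but it is a missing step, not a cosmetic difference.

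In part (ii) you take a genuinely different route: the Agrawal--Goyal-style excursion decomposition, reducing everything to $\sum_k\mathbb{E}[1/p_k-1]<\infty$ where $p_k$ is the posterior mass above $\tau_a$ after the $k$-th selection. You correctly flag this as the hard part, but you leave precisely that part unproved: (a) the finiteness of $\mathbb{E}[1/p_k]$ for small $k$ is not automatic for an improper Jeffreys posterior built on $c_a^-$ observations and requires a dedicated argument, and (b) the quantitative lower bound $p_k\gtrsim e^{-n_k d(\hat{\mu}_{a,n_k},\tau_a)}/\sqrt{n_k}$ is not stated in this form by Korda et al. The paper deliberately avoids this machinery: it first proves $\sum_t\mathbb{P}(N_a(t)\le t^b)<\infty$ (Lemma \ref{lem:N_larger_tb}) using only the crude consequence of the Bernstein--Von-Mises theorem that $\mathbb{P}(\mu(\theta_a(\tau_j))<\tau_a)\le C<1$ uniformly in $j$ --- no rate, no moments of $1/p_k$ --- and then applies the posterior concentration lemma conditionally on $N_a(t)>t^b$. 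That two-step shortcut is exactly the simplification the paper claims over Korda et al.'s Proposition 5 (possible here because arms are decoupled), and it is the point where your sketch, as written, does not close.
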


\subsection{Sketch of proof}
Here we give the main steps of the proof of Theorem \ref{th:upper_bound_N_thompson}
(see the Supplementary Material for complete proof).

(i). For a non-profitable arm $a\in\{1, \dots, K\}\setminus\mathcal{A}^*$, we must upper bound $\E[N_a(T)]$.
We first write:
\[
\E[N_a(T)] \lesssim \tilde{c}_a^+\{ K_T + \sum_{t\ge 1} \mathbb{P}(a\in A_t, E_a(t), N_a(t)>K_T) \},\]
where $K_T \approx \log(T)$ is, as in the proofs of \textsc{kl-UCB-4P} and \textsc{Bayes-UCB-4P}, a cut-off corresponding to the main term in our bound as suggested by the asymptotic lower bound in Theorem \ref{thm:lower_bound}
and $E_a(t)$ is a high probability event ensuring that the current empirical mean at times $t$, namely $\hat{\mu}_a(t)$,
is well concentrated around the true mean $\mu_a$.
It remains to prove that the sum of defavorable events (for $N_a(t)>K_T$ and under $E_a(t)$) is negligible compared to $K_T$.
Observe that the following holds:
\begin{equation}
  \label{eq:control_thompson}
  \begin{split}
    \mathbb{P}(a\in A_t, E_a(t), N_a(t)>K_T)
    \le \mathbb{P}(\mu(\theta_a(t))\ge \tau_a, E_a(t), N_a(t)>K_T),
  \end{split}
\end{equation}
where $\theta_a(t)$ is sampled from the posterior distribution.
Then we upper bound the right-hand side expression in Eq. \ref{eq:control_thompson} thanks to the deviation inequality
stated in Theorem 4 in \cite{korda2013thompson} and that we recall in Lemma \ref{lem:posterior_concentration} in the Supplementary Material.
Summing over these probabilities produce a term negligible compared to $K_T$.

(ii). For a profitable arm $a\in\mathcal{A}^*$, we must upper bound $\tilde{C}_a(T) - \E[N_a(T)]$, whic we decompose as follows:
\[
\tilde{C}_a(T) - \E[N_a(T)] \le \tilde{c}_a^+ \sum_{t=1}^{T-1} \mathbb{P}(a \notin A_{t+1}).\]
Then, we control the defavorable events: for all $t\ge 1$,
\begin{equation*}
   \begin{split}
     \sum_{t=1}^{T-1} \mathbb{P}(a \notin A_{t+1})
     \lesssim \sum_{t=1}^{+\infty} \mathbb{P}(\mu(\theta_a(t)) < \tau_a, E_a(t) | N_a(t)>t^b) + \sum_{t=1}^{+\infty} \mathbb{P}(),
   \end{split}
\end{equation*}
where the first series is proved to converge thanks to Lemma \ref{lem:posterior_concentration}
and the second too by Lemma \ref{lem:N_larger_tb} provided in the Supplementary.
We point out that our proof of Lemma \ref{lem:N_larger_tb}, which is a much simplified version of the proof
of Proposition 5 in \cite{korda2013thompson}. This simplification relies on the fact that
the objective of profitable bandit presents between the different categories contrary to the classical
multi-armed bandit problem where all arms are compared and the goal is to find the best one.

%These bounds allows us to conclude that the \textsc{TS-4P} algorithm is also asymptotically optimal.

%!TEX root = main.tex
%\twocolumn

\section{Asymptotic Optimality}
\label{sec:optimality}

A direct consequence of theorems \ref{th:upper_bound_N}, \ref{th:upper_bound_N_bayesUCB} and \ref{th:upper_bound_N_thompson} is the following asymptotic upper bound on the regret
of \textsc{kl-UCB-4P} (with $c\ge 3$), \text{Bayes-UCB-4P} (with $c\ge 5$) and \textsc{TS-4P}:
  \begin{equation*}
    \limsup_{T\to \infty} \frac{R_T}{\log T} \le \sum_{a,\, \mu_a<\tau_a} \frac{\tilde{c}_a^+ |\Delta_a|}{c_a^- d(\mu_a, \tau_a)}.
  \end{equation*}

Observe that this asymptotic upper bound on the regret is tight with the asymptotic lower bound in Section \ref{sec:lower_bound}
when $\tilde{c}_a^+ = c_a^-$ for all non-profitable arms $a\in\{1, \dots, K\}\setminus\mathcal{A}^*$, which is achieved when the corresponding $\{C_a(t)\}_{1\le t\le T}$ are constant.
In this particular case these three algorithms are asymptotically optimal.

%!TEX root = main.tex
%\twocolumn

\section{Numerical Experiments}
\label{sec:experiments}

We perform three series of numerical experiments for three different one-dimensional exponential families:
Bernoulli, Poisson and Exponential. In each scenario, we consider five arms ($K=5$) whose associated distributions belong to the same
one-dimensional exponential family and among which two are profitable ($|\mathcal{A}^*| = 2$).
We always choose the $C_a(t)$ such that $(C_a(t)-1)$ follows a Poisson distribution $\mathcal{P}(\lambda_a)$
where the respective values for the $\lambda_a$ of each category are $(3, 4, 5, 6, 7)$.
Moreover, the time horizon is chosen equal to $T=10000$ and the regret curves result from empirically averaging over $10000$ independent trajectories.
Our experiments also include algorithms, all index policies, whose theoretical properties have not been discussed in this article, namely:
\begin{itemize}
  \item \textsc{kl-UCB}$^+$: introduced in \cite{2016arXiv160101190K} and defined by the index\\ $u_a(t) = \sup\Big\{ q>\hat{\mu}_a(t):N_a(t) d(\hat{\mu}_a(t), q)\le \log(t(\log t)^c/N_a(t)) \Big\}$.
  \item \textsc{KL-Emp-UCB}: empirical \textsc{KL-UCB} introduced in \cite{2012arXiv1210.1136C} and using the empirical likelihood principle.
\end{itemize}

\subsection{Scenario 1: Bernoulli}

In the first scenario, the $K=5$ arms have Bernoulli distributions $\mathcal{B}(p_a)$ with respective parameters $(0.1, 0.3, 0.5, 0.5, 0.7)$ and thresholds $\tau_a$ in $(0.2, 0.2, 0.4, 0.6, 0.8)$.
Hence the profitable arms are the second and third ones. Notice that although arms $3$ and $4$ have the same distribution, namely $\mathcal{B}(0.5)$,
their thresholds are different and arm $4$ is non-profitable contrary to arm $4$.

\begin{figure}[H]
\vskip 0.2in
\begin{center}
\centerline{\includegraphics[width=\columnwidth]{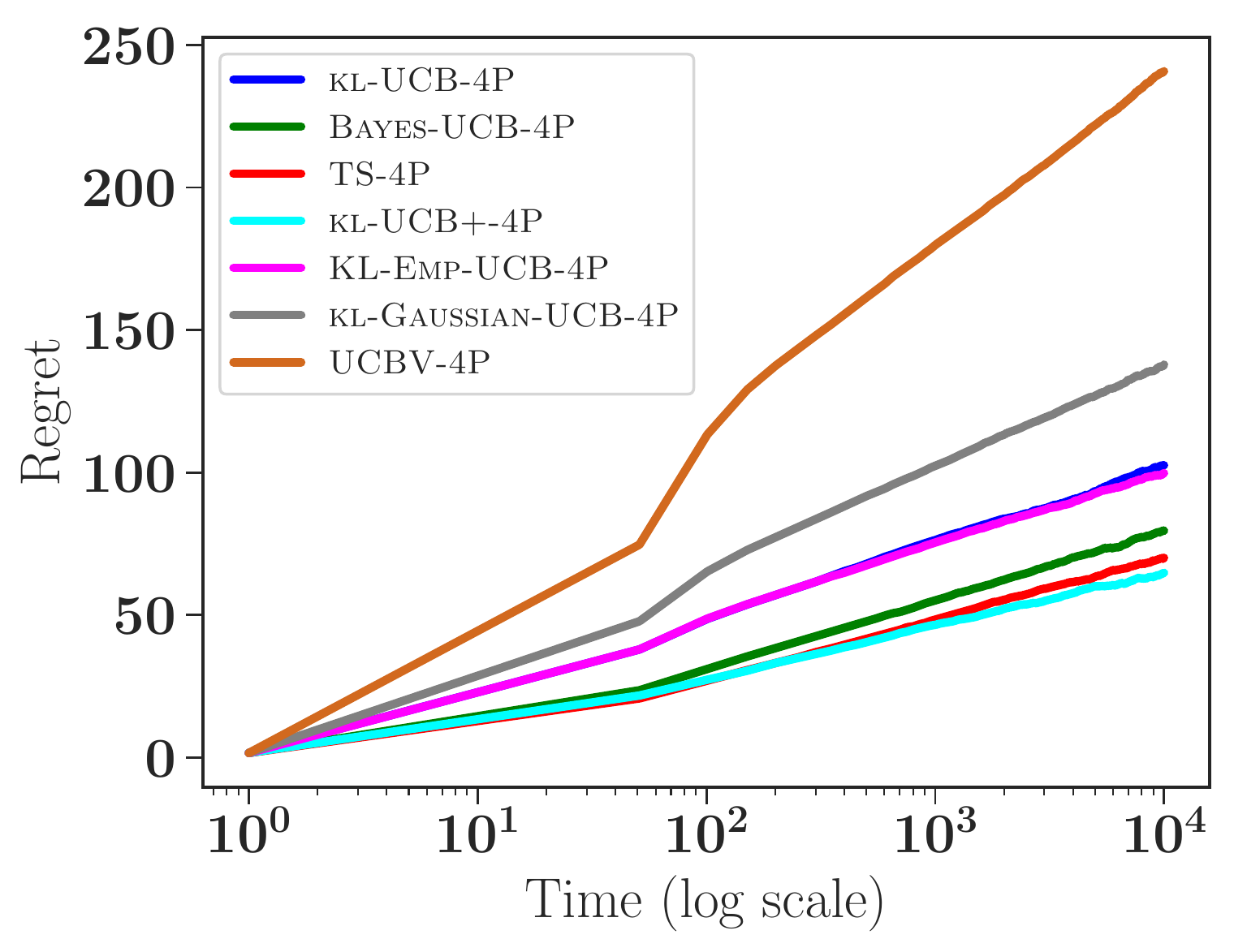}}
\caption{Regret of the various algorithms as a function of time in the Bernoulli scenario.}
\label{exp:bernoulli}
\end{center}
\vskip -0.2in
\end{figure}

All curves seem linear which means that the regret is regret logarithmically as a function of the time.
We observe that \textsc{kl-Gaussian-UCB-4P} has larger regret than other policies and that its slope seems larger too,
which confirms the observation resulting from Theorem \ref{th:upper_bound_N} followed by the application of Pinsker's inequality
that \textsc{kl-Gaussian-UCB-4P} performs worse than \textsc{kl-Bernoulli-UCB-4P}.
All other strategies have similar behavior and regret curves.

\subsection{Scenario 2: Poisson}
In the second scenario, the five categories $a\in\{1, \dots, 5\}$ have Poisson distributions $\mathcal{P}(\theta_a)$ with respective mean parameters $\theta_a$
as follows: $(1, 2, 3, 4, 5)$ and thresholds $\tau_a$: in $(2, 1, 4, 3, 6)$. In order to use \textsc{kl-Emp-UCB}, the Poisson rewards are truncated at
a maximal value chosen equal to $100$.
Hence the profitable arms are $2$ and $4$.

\begin{figure}[H]
\vskip 0.2in
\begin{center}
\centerline{\includegraphics[width=\columnwidth]{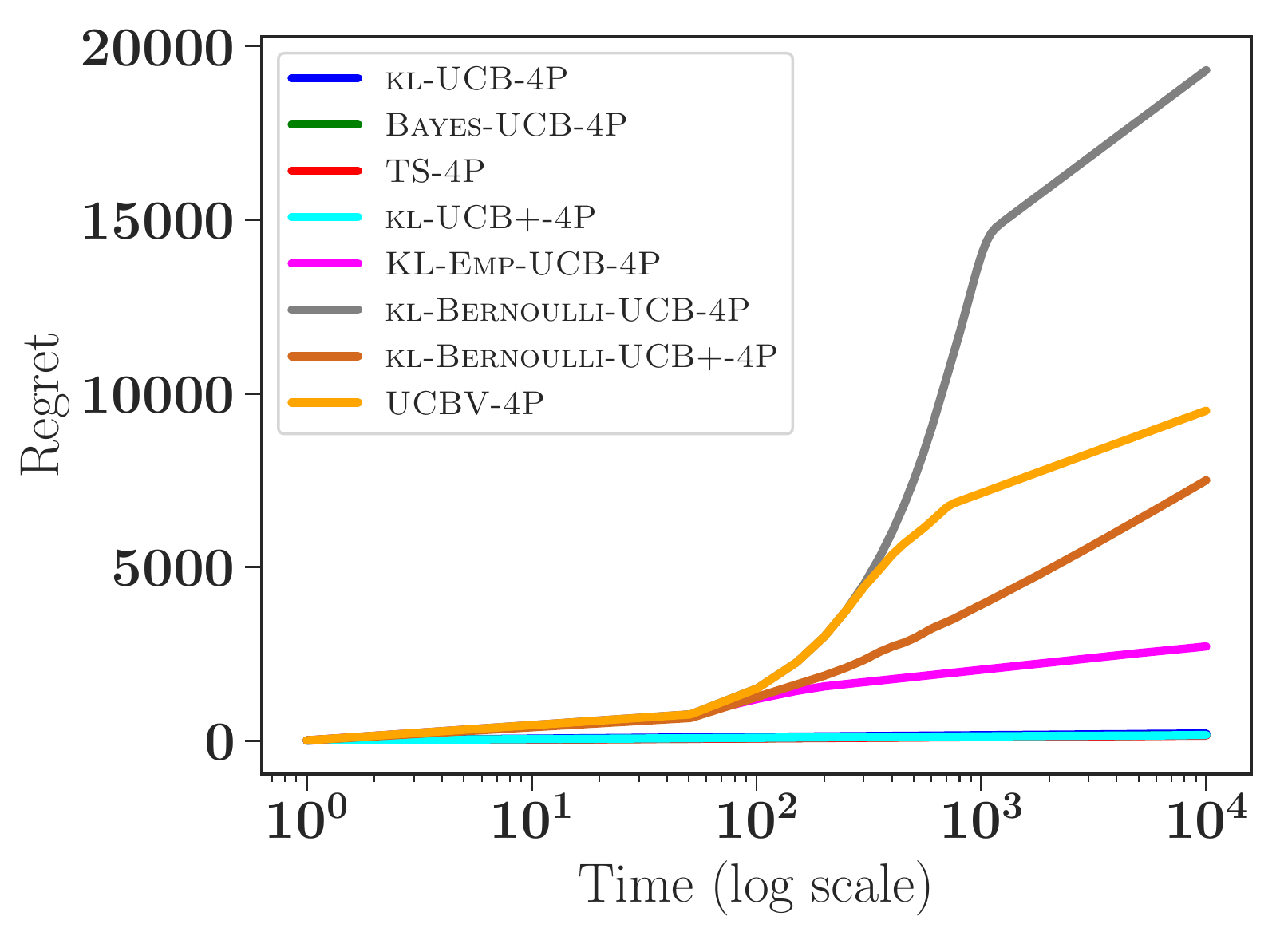}}
\caption{Regret of the various algorithms as a function of time in the Poisson scenario.}
\label{exp:poisson}
\end{center}
\vskip -0.2in
\end{figure}

Then we only run strategies performing the best which coincide with algorithms knowing in advance that the rewards follow Poisson distributions, through a well-suited prior distribution
for Bayesian policies or Kullback-Leibler divergence for UCB-like approaches. The distributions are kept the same but the problem is made harder
with sharper thresholds $\tau_a$: $(1.1, 1.9, 3.1, 3.9, 5.1)$.

\begin{figure}[H]
\vskip 0.2in
\begin{center}
\centerline{\includegraphics[width=\columnwidth]{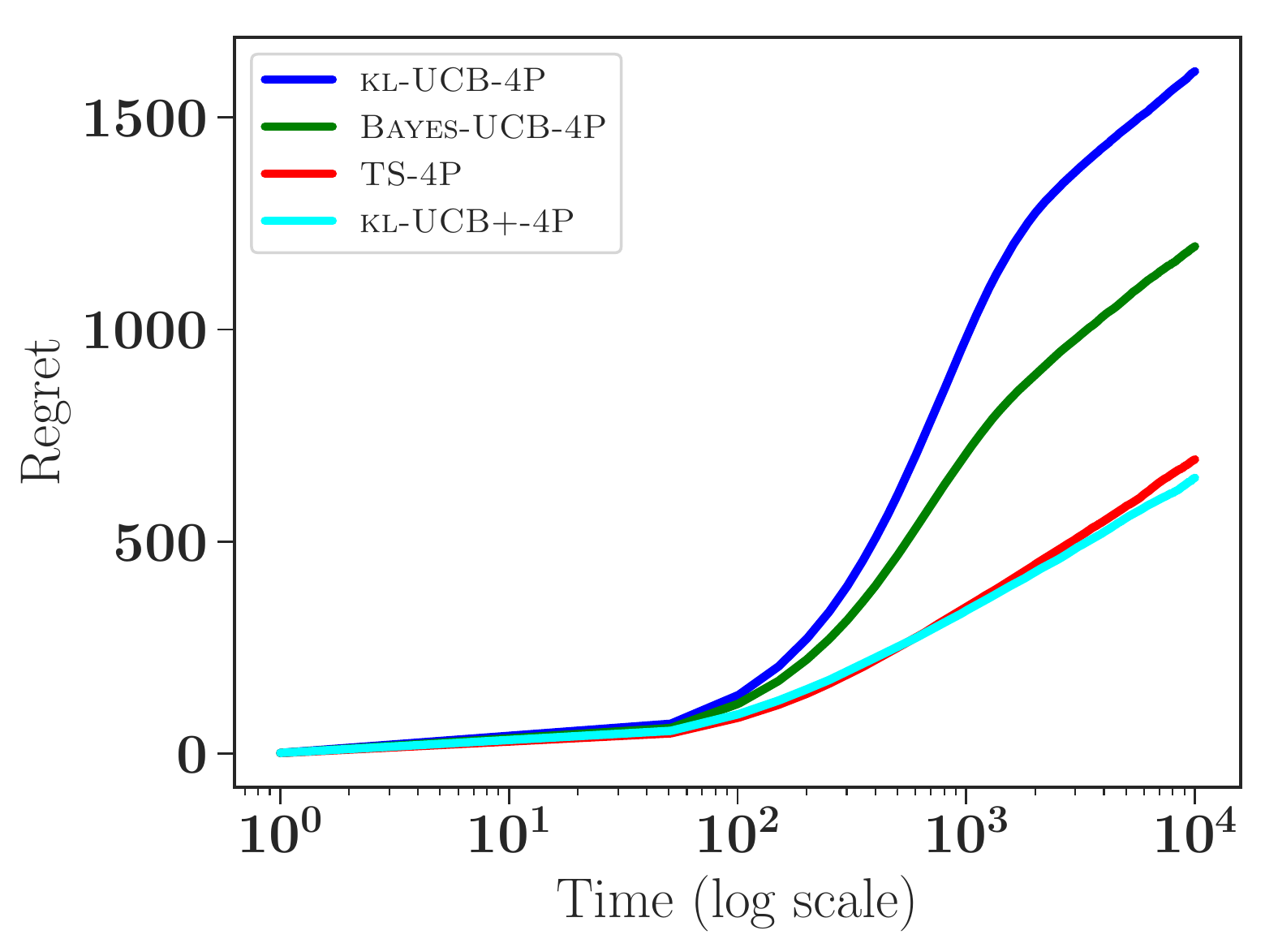}}
\caption{Regret of the best performing policies in the Poisson scenario.}
\label{exp:poisson_best}
\end{center}
\vskip -0.2in
\end{figure}

\subsection{Interpretation}

In both scenarios, all curves are linear i.e. the regret always grows logartihmically as a function of time.
We observe that the policies yielding smallest regret are those that know in advance the parametric family
where the distributions are living (Poisson in the second scenario).
Indeed, \textsc{kl-UCB-4P}, \textsc{Bayes-UCB-4P}, \textsc{TS-4P} and \textsc{kl-UCB}$^+$\textsc{-4P} all use the correct Kullback-Leibler divergence depending on the parametric family actually involved.
By contrast, the strategies achieving worse behavior are \textsc{kl-Gaussian-UCB-4P}, which always uses the Gaussian Kullback-Leibler divergence,
both \textsc{kl-Bernoulli-UCB-4P} and \textsc{kl-Bernoulli-UCB}$^+$\textsc{-4P} that always use the Kullback-Leibler divergence between Bernoulli distributions
and \textsc{Emp-KL-UCB-4P} which do only assume that the rewards are bounded.
Hence we see that prior knowledge on the reward distributions is critical in the efficiency of those algorithms.

%!TEX root = main.tex
%\twocolumn

\section{Conclusion}

%In this article we presented profitable bandits as a reformulation of the classical one-armed bandit problem.
%As usual in the bandit literature we introduced a notion of regret to minimize, gave an asymptotic lower bound on it valid for
%a broad class of algorihtms and proposed three asymptotically optimal approaches: \textsc{kl-UCB-4P}, \textsc{Bayes-UCB-4P} and \textsc{TS-4P}.
%Finally we compared these algorithms in some numerical experiments.

%Originally motivated by default risk management applications, this paper investigates a novel problem, referred to as the \emph{profitable bandit problem} here. At each step, an agent chooses a subset of the $K\geq 1$ possible actions. For each action chosen, she then receives the sum of a random number of rewards. Her objective is to maximize her cumulated earnings.
%We adapt and study three well-known strategies in this purpose, that were proved to be most efficient in other settings: \textsc{kl-UCB}, \textsc{BayesUCB} and \textsc{Thompson Sampling}. For each of them, we prove a finite time regret bound which, together with a lower bound we obtain as well, establishes asymptotic optimality.
%Our goal is also to \emph{compare} these three strategies from a theoretical and empirical perspective both at the same time. We give simple, self-contained proofs that emphasize their similarities, as well as their differences. While both Bayesian strategies are automatically adapted to the geometry of information, the numerical experiments carried out show a slight advantage for Thompson Sampling in practice.

Motivated by credit risk evaluation of different populations in a sequential context, this paper introduces the \emph{profitable bandit problem}, evaluates its difficulty by giving an asymptotic lower bound on the expected regret
and proposes and theoretically analyzes three algorithms, \textsc{kl-UCB-4P}, \textsc{Bayes-UCB-4P} and \textsc{TS-4P}, by giving finite-time upper bounds on their expected regret for reward distributions belonging to a one-dimensional exponential family.
All three algorithms are proven to be asymptotically optimal in the particular setting where for each catefory, a same number of clients is presented to the loaner at each time step.
An extension to general bounded distributions is proposed through two algorithms \textsc{kl-Bernoulli-UCB-4P} and \textsc{kl-Gaussian-UCB-4P} coming with finite-time analysis
directly derived from the analysis of \textsc{kl-UCB-4P}.
We finally compare all these strategies empirically and also against other policies inspired from other multi-armed bandits algorithms.
\textsc{Bayes-UCB-4P} and \textsc{TS-4P} perform the best in our numerical experiments and we observe that policies
having prior information on the distributions, through appropriate prior distribution for \textsc{Bayes-UCB-4P} and \textsc{TS-4P}
or Kullback-Leibler divergence for \textsc{kl-UCB-4P}, perform much better than non-adaptive strategies like
\textsc{kl-Bernoulli-UCB-4P} and \textsc{kl-Gaussian-UCB-4P}.

%\section*{Acknowledgments}
%This work was supported by a public grant (\textit{Investissement d'avenir} project, reference ANR-11-LABX-0056-LMH,
%LabEx LMH) and by the industrial chair \textit{Machine Learning for Big Data} from Télécom ParisTech.

\bibliographystyle{plain}
\bibliography{bib}

\newpage

%!TEX root = main.tex
\appendix
\onecolumn

\section{Appendix}
\label{appendix}

%\subsection{Examples of one-dimensional exponential families}

\subsection{Proof of Theorem \ref{thm:lower_bound}}

%\begin{proof}
  We use the inequality $(F)$ in Section $2$ in \cite{2016arXiv160207182G}, a consequence of the contraction of entropy property, which straightforwardly extends from the classical multi-armed bandit setting
  to ours where several arms can be pulled at each round $t$ and a number $C_a(t)\ge 1$ of observations are observed simultaneously for each pulled arm $a$. Then we have
  \begin{equation}
    \label{eq:contraction_entropy}
    \sum_{a=1}^K \mathbb{E}_\nu[N_a(T)]\text{KL}(\nu_a, \nu'_a) \ge \text{kl}(\mathbb{E}_\nu[Z], \mathbb{E}_{\nu'}[Z]),
  \end{equation}
  where $Z$ is any $\sigma(I_T)$-measurable random variable with values in $[0, 1]$.
  Consider a thresholding bandit problem $(\nu, \tau)\in \mathcal{D}$ with at least one non-profitable arm $a\in\{1, \dots, K\}$, we define a modified problem $(\nu', \tau)$ such that $\nu'_k=\nu_k$ for all $k\neq a$ and $\nu'_a \in \mathcal{D}_a$
  verifies $\mu'_a>\tau_a$. Then, considering $Z=N_a(T)/\tilde{C}_a(T)$, Eq. \ref{eq:contraction_entropy} rewrites as follows:

  \begin{equation*}
    \begin{split}
    \mathbb{E}_\nu[N_a(T)]\text{KL}(\nu_a, \nu'_a)
    &\ge \text{kl}(\mathbb{E}_\nu[N_a(T)]/\tilde{C}_a(T), \mathbb{E}_{\nu'}[N_a(T)]/\tilde{C}_a(T))\\
    &\ge \left(1-\frac{\mathbb{E}_\nu[N_a(T)]}{\tilde{C}_a(T)}\right) \log \left( \frac{\tilde{C}_a(T)}{\tilde{C}_a(T)-\mathbb{E}_{\nu'}[N_a(T)]} \right)-\log(2),
    \end{split}
  \end{equation*}

  where we used for the last inequality that for all $(p, q)\in [0, 1]^2$,
  \[
    \text{kl}(p, q)\ge (1-p)\log \left( \frac{1}{1-q} \right)-\log(2).
  \]
  Then, by uniform efficiency it holds: $\mathbb{E}_\nu[N_a(T)] = o(\tilde{C}_a(T))$ and $\tilde{C}_a(T)-\mathbb{E}_{\nu'}[N_a(T)] = o(\tilde{C}_a(T)^\alpha)$ for all $\alpha \in (0, 1]$.
  Hence for all $\alpha\in (0, 1]$,
  \begin{equation*}
    \liminf_{T\to \infty} \frac{1}{\log T}\mathbb{E}_\nu[N_a(T)] \text{KL}(\nu_a, \nu'_a)
    \ge \liminf_{T\to \infty} \frac{1}{\log T}\log \left( \frac{\tilde{C}_a(T)}{\tilde{C}_a(T)^\alpha} \right) = 1-\alpha.
  \end{equation*}
  Taking the limit $\alpha\to 0$ in the right-hand side and taking the infimum over all distributions $\nu'_a \in \mathcal{D}_a$ such that $\mu'_a>\tau_a$ in the left-hand side conclude the proof.
%\end{proof}

\subsection{Proof of Theorem \ref{th:upper_bound_N}}

%\begin{proof}
  For any arm $a\in \{1, \dots, K\}$, the average reward at time $t$ is denoted by
  $\hat{\mu}_a(t)=S_a(t)/N_a(t)$ where $S_a(t)=\sum_{s=1}^t \sum_{c=1}^{C_a(s)} X_{a, c, s} \I\{a\in A_s \}$
  and $N_a(t)=\sum_{s=1}^t C_a(s) \I\{a\in A_s \}$.
  For every positive integer $s$, we also denote by $\hat{\mu}_{a, s}=(X_{a, 1}+\dots+X_{a, s})/s$ with $X_{a, 1}, \dots, X_{a, s}$
  the first $s$ samples pulled from arm $a$ (with arbitrary choice when some of these random variables are pulled together), so that $\hat{\mu}_t(a) = \hat{\mu}_{a, N_a(t)}$.
  The upper confidence bound for $\mu_a$ appearing in $\tau$-KL-UCB is then given by:
  \begin{equation*}
    u_a(t) = \sup\left\{ q>\hat{\mu}_a(t):N_a(t) d(\hat{\mu}_a(t), q)\le \log t+c\log\log t \right\}.
  \end{equation*}
  For $x, y \in [0, 1]$, define $d^+(x, y)=d(x, y)\mathbb{I}\{x<y\}$.

  (i). Let $a\in \{1, \dots, K\}\setminus \mathcal{A}^*$ be a non-profitable arm i.e. such that $\mu_a<\tau_a$. Given $\epsilon \in ]0, 1[$, we upper bound the expectation of $N_a(T)$ as follows,
  \begin{equation*}
    \mathbb{E}[N_a(T)] = \mathbb{E}\left[\sum_{t=1}^T C_a(t) \mathbb{I}\{a\in A_t\}\right]
    \le \tilde{c}_a^+ \E\left[\sum_{t=1}^T \mathbb{I}\{a\in A_t\}\right],
  \end{equation*}
  where $\tilde{c}_a^+ = \max_{1\le t\le T}\{\E[C_a(t)]\}$.
  %%%%%%%%%%%%%%%%%
  Now observe for $t\ge 1$ that $a\in A_{t+1}$ implies $u_a(t)\ge\tau_a$ and hence,
  \begin{equation*}
    d^+(\hat{\mu}_a(t), \tau_a)\le d(\hat{\mu}_a(t), u_a(t)) = \frac{\log t + c\log\log t}{N_a(t)}.
  \end{equation*}
  Then,
  \begin{equation}
    \label{eq:split_subopt_klUCB}
    \begin{split}
      &\sum_{t=1}^T \mathbb{I}\{a\in A_t\}\\
      & = 1 + \sum_{t=1}^{T-1} \mathbb{I}\left\{a\in A_{t+1}\right\} \sum_{s=1}^{t} \sum_{1\le i_1 < \dots < i_s \le t} \mathbb{I}\left\{ a\in\bigcap_{i\in\{i_1, \dots, i_s\}} A_{i}, a\notin\bigcup_{i\in \{1, \dots, t\}\setminus\{i_1, \dots, i_s\}} A_{i}\right\}\\
      & \qquad \quad \times \mathbb{I}\left\{(C_a(i_1)+\dots+C_a(i_s)) d^+(\hat{\mu}_{a, C_a(i_1)+\dots+C_a(i_s)}, \tau_a)\le \log t+c\log\log t \right\}.
    \end{split}
  \end{equation}
  Given $\epsilon \in ]0, 1[$, we upper bound the last indicator function appearing in Eq. (\ref{eq:split_subopt_klUCB}) by
  \begin{equation}
    \label{eq:decouplage_klucb}
    \begin{split}
    &\mathbb{I}\{s<K_T\} + \sum_{k=c_a^- s}^{c_a^+ s} \mathbb{I}\left\{s\ge K_T, k d^+(\hat{\mu}_{a, k}, \tau_a) \le \log T+c\log\log T \right\}\\
    &\le \mathbb{I}\{s<K_T\} + \sum_{k=c_a^- s}^{c_a^+ s} \mathbb{I}\left\{s\ge K_T, d^+(\hat{\mu}_{a, k}, \tau_a)\le \frac{d(\mu_a, \tau_a)}{1+\epsilon} \right\},
  \end{split}
  \end{equation}
  where $K_T = \left\lceil (1+\epsilon)\frac{\log T+c\log\log T}{c_a^- d(\mu_a, \tau_a)} \right\rceil$.
  The last expression in Eq. (\ref{eq:decouplage_klucb}) is not using the indices $t, i_1, \dots, i_s$ which
  allows us to exchange the sums over $t$ and $s$ in Eq. (\ref{eq:split_subopt_klUCB}) and to obtain
  \begin{equation*}
    \begin{split}
      &\sum_{t=1}^T \mathbb{I}\{a\in A_t\}\\
      & \le 1 + \sum_{s=1}^{T} \left( \mathbb{I}\{s<K_T\} + \sum_{k=c_a^- s}^{c_a^+ s} \mathbb{I}\left\{s\ge K_T, d^+(\hat{\mu}_{a, k}, \tau_a)\le \frac{d(\mu_a, \tau_a)}{1+\epsilon} \right\} \right)\\
      & \qquad \quad \times \sum_{t=1}^{T-1} \mathbb{I}\left\{a\in A_{t+1}\right\} \sum_{1\le i_1 < \dots < i_s \le t} \mathbb{I}\left\{ a\in\bigcap_{i\in\{i_1, \dots, i_s\}} A_{i}, a\notin\bigcup_{i\in \{1, \dots, t\}\setminus\{i_1, \dots, i_s\}} A_{i}\right\}\\
      & \le K_T + \sum_{s=K_T}^{T} \sum_{k=c_a^- s}^{c_a^+ s} \mathbb{I}\left\{d^+(\hat{\mu}_{a, k}, \tau_a)\le \frac{d(\mu_a, \tau_a)}{1+\epsilon} \right\},
    \end{split}
  \end{equation*}
  where the last inequality is implied by
  \begin{equation}
    \label{eq:argument_classique_indicatrices}
    \sum_{t=1}^{T-1} \mathbb{I}\left\{a\in A_{t+1}\right\} \sum_{1\le i_1 < \dots < i_s \le t} \mathbb{I}\left\{ a\in\bigcap_{i\in\{i_1, \dots, i_s\}} A_{i}, a\notin\bigcup_{i\in \{1, \dots, t\}\setminus\{i_1, \dots, i_s\}} A_{i}\right\} \le 1.
  \end{equation}
  %%%%%%%%%%%%%%%%%
  Hence,
  \begin{equation*}
    \begin{split}
    \mathbb{E}[N_a(T)] &\le \tilde{c}_a^+ \left\{ K_T + \sum_{s=K_T}^{+\infty} \sum_{k=c_a^- s}^{+\infty} \mathbb{P}\left( d^+(\hat{\mu}_{a, k}, \tau_a)\le \frac{d(\mu_a, \tau_a)}{1+\epsilon} \right) \right\}\\
    & \le (1+\epsilon) \frac{\tilde{c}_a^+}{c_a^-} \frac{\log T+c\log\log T}{d(\mu_a, \tau_a)}+ \tilde{c}_a^+ \left\{1+ \frac{H_1(\epsilon)}{T^{\beta_1(\epsilon)}}\right\},
  \end{split}
  \end{equation*}
  comes from Lemma \ref{lem:epsilon} with $H_1(\epsilon)$ and $\beta_1(\epsilon)$ positive functions of $\epsilon$.\\

  (ii). Now consider $a\in A^*$ i.e. verifying $\mu_a>\tau_a$. It follows,
  \begin{equation*}
    \tilde{C}_a(T)-\mathbb{E}[N_a(T)] = \mathbb{E}\left[\sum_{t=2}^T C_a(t) \mathbb{I}\{a\notin A_t\}\right]
    \le \tilde{c}_a^+ \sum_{t=1}^{T-1} \mathbb{P}\left( u_a(t) < \mu_a \right).
  \end{equation*}
  Let $t\in\{ 1, \dots, T-1 \}$ and observe that $(u_a(t) < \mu_a) \subset (d^+(\hat{\mu}_a(t), \mu_a) > d(\hat{\mu}_a(t), u_a(t))) $. Hence for $c\ge 3$ and $t\ge \max(3, c_a^+)$,
  \begin{equation*}
    \begin{split}
      &\mathbb{P}\left( u_a(t) < \mu_a \right)\\
      & \le \mathbb{P}\left( N_a(t) d^+(\hat{\mu}_a(t), \mu_a) > \delta_t \right)
      \le (\delta_t \log(c_a^+t) + 1)\exp(-\delta_t + 1)\\
      &= \frac{e((\log t)^2 + c\log(t)\log\log(t) + \log(c_a^+)\log(t) + c \log(c_a^+)\log\log(t) + 1)}{t(\log t)^c}\\
      &\le \frac{e(2c+3)}{t \log t},
    \end{split}
  \end{equation*}
  where $\delta_t = \log t + c\log\log t > 1$ and the second inequality results from the self-normalized concentration inequality
  stated in Lemma 10 in \cite{2012arXiv1210.1136C}.
  Then by summing over $t$,
  \begin{equation*}
    \begin{split}
    \tilde{C}_a(T)-\mathbb{E}[N_a(T)] &\le \tilde{c}_a^+\left\{ 2 + c_a^+ + e(2c+3)\sum_{t=3}^{T-1}\frac{1}{t\log t} \right\}\\
    &\le \tilde{c}_a^+ \{e(2c+3)\log\log T + c_a^+ + 3\}.
  \end{split}
  \end{equation*}

%\end{proof}

\subsection{Lemma \ref{lem:epsilon}}

\begin{lemma}
  \label{lem:epsilon}
  Let $a\in\{1, \dots, K\}\setminus\mathcal{A^*}$ a non-profitable arm (i.e. $\mu_a<\tau_a$), $\epsilon\in ]0, 1[$ and $K_T = \left\lceil f(\epsilon)\frac{\log T+c\log\log T}{c_a^- d(\mu_a, \tau_a)} \right\rceil$ with $f$ a function such that $f(\epsilon')>1$ for all $\epsilon' \in ]0, 1[$.
  Then there exist $H(\epsilon)>0$ and $\beta(\epsilon)>0$ such that
  \begin{equation*}
    \sum_{s=K_T}^{+\infty} \sum_{k=c_a^- s}^{+\infty} \mathbb{P}\left( d^+(\hat{\mu}_{a, k}, \tau_a) \le \frac{d(\mu_a, \tau_a)}{f(\epsilon)} \right)
    \le \frac{H(\epsilon)}{T^{\beta(\epsilon)}},
  \end{equation*}
  where $H(\epsilon)$ and $\beta(\epsilon)$ are positive functions of $\epsilon$ depeding on $\mu_a, \tau_a$ and $c_a^-$.
\end{lemma}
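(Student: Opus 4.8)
The plan is to bound each probability $\mathbb{P}\!\left( d^+(\hat{\mu}_{a,k},\tau_a) \le d(\mu_a,\tau_a)/f(\epsilon) \right)$ by an exponentially decaying term in $k$, and then sum the resulting geometric-type double series. Since $\mu_a < \tau_a$ and $f(\epsilon) > 1$, the target value $d(\mu_a,\tau_a)/f(\epsilon)$ is strictly smaller than $d(\mu_a,\tau_a)$. The event $\{d^+(\hat{\mu}_{a,k},\tau_a) \le d(\mu_a,\tau_a)/f(\epsilon)\}$ forces $\hat{\mu}_{a,k}$ to lie above some threshold $\gamma(\epsilon) \in ]\mu_a, \tau_a[$ strictly greater than $\mu_a$ (using that $x \mapsto d^+(x,\tau_a)$ is continuous, decreasing on $[0,\tau_a]$, and equals $d(\mu_a,\tau_a)$ at $x = \mu_a$). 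So the event is contained in $\{\hat{\mu}_{a,k} \ge \gamma(\epsilon)\}$, a deviation of the empirical mean of $k$ i.i.d.\ samples from $\nu_{\theta_a}$ above a point to the right of its mean.

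The second step is to apply the Cram\'er--Chernoff bound afforded by the exponential-family structure: by the variational expression for $d$ in Equation~\eqref{eq:kullback_exp}, one has $\mathbb{P}(\hat{\mu}_{a,k} \ge \gamma(\epsilon)) \le \exp(-k\, d(\gamma(\epsilon),\mu_a))$ for $\gamma(\epsilon) > \mu_a$, where $d(\gamma(\epsilon),\mu_a) > 0$. Write $\rho(\epsilon) := d(\gamma(\epsilon),\mu_a) > 0$. Then
\begin{equation*}
  \sum_{s = K_T}^{+\infty} \sum_{k = c_a^- s}^{+\infty} \mathbb{P}\!\left( d^+(\hat{\mu}_{a,k},\tau_a) \le \frac{d(\mu_a,\tau_a)}{f(\epsilon)} \right)
  \le \sum_{s = K_T}^{+\infty} \sum_{k = c_a^- s}^{+\infty} e^{-\rho(\epsilon) k}
  = \sum_{s = K_T}^{+\infty} \frac{e^{-\rho(\epsilon) c_a^- s}}{1 - e^{-\rho(\epsilon)}}
  = \frac{e^{-\rho(\epsilon) c_a^- K_T}}{(1 - e^{-\rho(\epsilon)})(1 - e^{-\rho(\epsilon) c_a^-})}.
\end{equation*}
Finally I substitute the definition $K_T = \lceil f(\epsilon)\frac{\log T + c\log\log T}{c_a^- d(\mu_a,\tau_a)}\rceil \ge f(\epsilon)\frac{\log T}{c_a^- d(\mu_a,\tau_a)}$, so that $e^{-\rho(\epsilon) c_a^- K_T} \le T^{-\beta(\epsilon)}$ with $\beta(\epsilon) := \rho(\epsilon) f(\epsilon)/d(\mu_a,\tau_a) > 0$, and collect the remaining constants into $H(\epsilon) := \left[(1 - e^{-\rho(\epsilon)})(1 - e^{-\rho(\epsilon) c_a^-})\right]^{-1}$. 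Both $H(\epsilon)$ and $\beta(\epsilon)$ depend only on $\mu_a$, $\tau_a$ and $c_a^-$ (through $\gamma(\epsilon)$ and $\rho(\epsilon)$), as claimed.

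The main technical point to get right is the first step: quantifying how far to the right of $\mu_a$ the empirical mean must be pushed, i.e.\ producing an explicit $\gamma(\epsilon) > \mu_a$ with $\gamma(\epsilon) < \tau_a$ such that $d^+(x,\tau_a) \le d(\mu_a,\tau_a)/f(\epsilon) \Rightarrow x \ge \gamma(\epsilon)$, and verifying that $d(\gamma(\epsilon),\mu_a) > 0$ is bounded below by a positive constant independent of $T$. This follows from strict monotonicity and continuity of $x \mapsto d(x,\tau_a)$ on $[\mu_a, \tau_a]$ (a consequence of the twice-differentiability assumption on $F$), but care is needed because $d^+$ is only defined piecewise; one must note that on the relevant range $\hat{\mu}_{a,k} < \tau_a$ so $d^+ = d$ there, while the case $\hat{\mu}_{a,k} \ge \tau_a$ is automatically excluded since then $d^+ = 0 \not> d(\mu_a,\tau_a)/f(\epsilon)$ would need separate (trivial) handling — actually $d^+ (\hat\mu_{a,k},\tau_a)=0$ when $\hat\mu_{a,k}\ge\tau_a$, which does satisfy the inequality, so that sub-event must also be absorbed into $\{\hat{\mu}_{a,k} \ge \gamma(\epsilon)\}$, which it trivially is since $\gamma(\epsilon) < \tau_a$. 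Everything else is the routine geometric-series bookkeeping sketched above.
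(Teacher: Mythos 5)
Your proof is correct and follows essentially the same route as the paper's: the paper also reduces the event to $\{\hat{\mu}_{a,k}\ge r(\epsilon)\}$ for the point $r(\epsilon)\in\,]\mu_a,\tau_a[$ solving $d(r(\epsilon),\tau_a)=d(\mu_a,\tau_a)/f(\epsilon)$ (your $\gamma(\epsilon)$), applies the same Chernoff bound $e^{-k\,d(r(\epsilon),\mu_a)}$, and sums the same geometric series, arriving at identical expressions for $H(\epsilon)$ and $\beta(\epsilon)$. Your explicit handling of the $\hat{\mu}_{a,k}\ge\tau_a$ sub-event (where $d^+=0$) is a small point of care the paper glosses over with its ``if and only if,'' but it changes nothing substantive.
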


\begin{proof}
  Observe that $d^+(\hat{\mu}_{a, k}, \tau_a)\le d(\mu_a, \tau_a)/f(\epsilon)$ if and only if $\hat{\mu}_{a, k}\ge r(\epsilon)$ where $r(\epsilon)\in ]\mu_a, \tau_a[$
  verifies $d(r(\epsilon), \tau_a)=d(\mu_a, \tau_a)/f(\epsilon)$. Thus,
  \begin{equation*}
    \mathbb{P}\left( d^+(\hat{\mu}_{a, k}, \tau_a)\le \frac{d(\mu_a, \tau_a)}{f(\epsilon)} \right) = \mathbb{P}\left( \hat{\mu}_{a, k}\ge r(\epsilon) \right)
    \le e^{-k d(r(\epsilon), \mu_a)}
  \end{equation*}
  and
  \begin{equation*}
    \begin{split}
    \sum_{s=K_T}^{T} \sum_{k=c_a^- s}^{c_a^+ s} \mathbb{P}\left( d^+(\hat{\mu}_{a, k}, \tau_a)\le \frac{d(\mu_a, \tau_a)}{f(\epsilon)} \right)
    &\le \sum_{s=K_T}^{+\infty} \sum_{k=c_a^- s}^{+\infty} e^{-k d(r(\epsilon), \mu_a)}\\
    &= \frac{1}{1-e^{-d(r(\epsilon), \mu_a)}} \sum_{s=K_T}^{+\infty} e^{-c_a^- s d(r(\epsilon), \mu_a)}\\
    &= \frac{e^{-c_a^- d(r(\epsilon), \mu_a)K_T}}{(1-e^{-d(r(\epsilon), \mu_a)})\left(1-e^{- c_a^- d(r(\epsilon), \mu_a)}\right)}\\
    &\le \frac{H(\epsilon)}{T^{\beta(\epsilon)}},
  \end{split}
  \end{equation*}
  where $H(\epsilon)=\left[\left(1-e^{-d(r(\epsilon), \mu_a)}\right)\left(1-e^{- c_a^- d(r(\epsilon), \mu_a)}\right)\right]^{-1}$ and $\beta(\epsilon)=f(\epsilon)d(r(\epsilon), \mu_a)/d(\mu_a, \tau_a)$.
  %Given that $r(\epsilon)=\mu_a + \mathcal{O}(\epsilon)$, it follows:
  %\begin{itemize}
  %  \item if $d=d_\text{BER}: \quad C_1(\epsilon)=\mathcal{O}(\epsilon^{-1})$ and $\beta(\epsilon)=\mathcal{O}(\epsilon)$,
  %  \item if $d=d_\text{QUAD}: \quad C_1(\epsilon)=\mathcal{O}(\epsilon^{-2})$ and $\beta(\epsilon)=\mathcal{O}(\epsilon^2)$.
  %\end{itemize}
\end{proof}

\subsection{Proof of Theorem \ref{th:upper_bound_N_bayesUCB}}

%\begin{proof}

  (i). Let $a\in \{1, \dots, K\}\setminus \mathcal{A}^*$ be a non-profitable arm (i.e. $\mu_a<\tau_a$). We upper bound the expectation of $N_a(T)$ as follows:

  \begin{alignat}{2}
    \label{eq:split_subopt_BayesUCB}
    &\mathbb{E}[N_a(T)] = \mathbb{E}\left[\sum_{t=1}^T C_a(t) \mathbb{I}\{a\in A_t\}\right]
    \le \tilde{c}_a^+ \E \left[ 1 + \mathbb{E}\left[\sum_{t=1}^{T-1} \mathbb{I}\{ \bar{q}_a(t) \ge \tau_a \}\right] \right] \nonumber \\
    &= \tilde{c}_a^+ \E \left[ 1 + \sum_{t=1}^{T-1} \I\left\{\pi_{a, N_a(t), \bar{\mu}_a(t)}([\tau_a, \mu^+[) \ge \frac{1}{t(\log t)^c}, a \in A_{t+1}\right\} \right] \nonumber \\
    &\le \tilde{c}_a^+ \E \Biggl[ 1 + \sum_{t=1}^{T-1} \I\left\{\bar{\mu}_a(t)<\tau_a, \pi_{a, N_a(t), \bar{\mu}_a(t)}([\tau_a, \mu^+[) \ge \frac{1}{t(\log t)^c}, a \in A_{t+1} \right\}\\
    &\qquad \quad \quad + \sum_{t=1}^{T-1} \I\{\bar{\mu}_a(t) \ge \tau_a, a \in A_{t+1}\} \Biggr].
  \end{alignat}

  Using Lemma 4 in \cite{2016arXiv160101190K}, the first sum in (\ref{eq:split_subopt_BayesUCB}) is upper bounded by
  \begin{equation}
    \begin{split}
      \label{eq:first_sum}
    & \sum_{t=1}^{T-1} \I\left\{ B\sqrt{N_a(t)} e^{-N_a(t)d^+(\bar{\mu}_a(t), \tau_a)} \ge \frac{1}{t(\log t)^c}, a \in A_{t+1}\right\} \\
    & = \sum_{t=1}^{T-1} \mathbb{I}\left\{a\in A_{t+1}\right\} \sum_{s=1}^{t} \sum_{1\le i_1 < \dots < i_s \le t} \mathbb{I}\left\{ a\in\bigcap_{i\in\{i_1, \dots, i_s\}} A_{i}, a\notin\bigcup_{i\in \{1, \dots, t\}\setminus\{i_1, \dots, i_s\}} A_{i}\right\}\\
    & \quad \times \I\left\{B\sqrt{C_a(i_1)+\dots+C_a(i_s)} e^{-(C_a(i_1)+\dots+C_a(i_s))d^+(\bar{\mu}_{a, C_a(i_1)+\dots+C_a(i_s)}, \tau_a)} \ge \frac{1}{t(\log t)^c}\right\},
  \end{split}
  \end{equation}
  where $B$ is a constant depending on $\mu_0^-$, $\mu_0^+$ and on prior densities.
  Then we upper bound the last indicator function appearing in Eq. (\ref{eq:first_sum}) by
  \begin{equation}
    \label{eq:bayesUCB_sumk}
    \begin{split}
      & \I\{ s<K_T \} + \sum_{k=c_a^-s}^{c_a^+s} \I\left\{s\ge K_T, k d^+(\bar{\mu}_{a, k}, \tau_a) \le \log T + c\log\log T + \frac{1}{2}\log k + \log B \right\} \\
      &\le \I\{ s<K_T \} + \sum_{k=c_a^-s}^{c_a^+s} \I\left\{s\ge K_T, k d^+(\hat{\mu}_{a, k}, \tau_a) \le \log T + c\log\log T + \frac{1}{2}\log k + \log B \right\}\\
      &\quad+ \I\{\hat{\mu}_{a, k}<\mu_0^-\}.
    \end{split}
  \end{equation}
  We are now able to upper bound the right-hand side expression in Eq. (\ref{eq:first_sum}) by injecting Eq. (\ref{eq:bayesUCB_sumk}) and switching the sums on indices $t$ and $s$, which leads to
  \begin{equation}
    \label{eq:bayesUCB_sumt_removed}
    \begin{split}
      &\sum_{t=1}^{T-1} \I\left\{\bar{\mu}_a(t)<\tau_a, \pi_{a, N_a(t), \bar{\mu}_a(t)}([\tau_a, \mu^+[) \ge \frac{1}{t(\log t)^c}, a \in A_{t+1} \right\}\\
      &\le K_T-1 + \sum_{s=1}^T \sum_{k=c_a^-s}^{c_a^+s} \I\left\{s\ge K_T, k d^+(\hat{\mu}_{a, k}, \tau_a) \le \log T + c\log\log T + \frac{1}{2}\log k + \log B \right\}\\
      &\quad+ \I\{\hat{\mu}_{a, k}<\mu_0^-\},
    \end{split}
  \end{equation}
  where we used the same argument as in Eq. (\ref{eq:argument_classique_indicatrices}) to get rid of the sum over $t$.

  Given $\epsilon \in ]0, 1[$ we define $K_T = \left\lceil \frac{1+\epsilon}{1-\epsilon}\frac{\log T + c\log\log T}{c_a^- d(\mu_a, \tau_a)} \right\rceil$
  and denote by $N_a(\epsilon)$ the constant such that
  $T\ge N_a(\epsilon)$ implies:
  \begin{equation}
    K_T \ge \left\lceil \frac{3}{c_a^-} \right\rceil \quad \text{and} \quad  \frac{1}{c_a^- K_T} \left( \frac{1}{2}\log (c_a^- K_T) + \log(B) \right) \le \frac{\epsilon}{1+\epsilon}d(\mu_a, \tau_a),
  \end{equation}
  where the first inequality ensures that for all $k\ge c_a^- K_T$, the function $k\mapsto \log(x)/x$ decreases.
  Hence, the first indicator function appearing in the right-hand side in Eq. (\ref{eq:bayesUCB_sumt_removed}) is upper bounded by
  \begin{equation}
    \label{eq:bayes_split_kt}
    \I\left\{ s\ge K_T, d^+(\hat{\mu}_{a, k}, \tau_a)\le \frac{1-\epsilon}{1+\epsilon}d(\mu_a, \tau_a) \right\}.
  \end{equation}
  By combining equations (\ref{eq:split_subopt_BayesUCB}), (\ref{eq:bayesUCB_sumt_removed}) and (\ref{eq:bayes_split_kt}) we obtain
  \begin{equation}
    \label{eq:bayes_passage_proba}
    \begin{split}
    \E[N_a(T)]
    & \le \tilde{c}_a^+\Biggl\{ K_T + \sum_{s=K_T}^T \sum_{k=c_a^- s}^{c_a^+ s} \mathbb{P}\left( d^+(\hat{\mu}_{a, k}, \tau_a)\le \frac{1-\epsilon}{1+\epsilon}d(\mu_a, \tau_a) \right)\\
    & \qquad + \sum_{s=1}^T \sum_{k=c_a^- s}^{c_a^+ s} \mathbb{P}(\hat{\mu}_{a, k}<\mu_0^-)
    + \sum_{t=1}^{T-1} \mathbb{P}(\bar{\mu}_a(t) \ge \tau_a, a \in A_{t+1}) \Biggr\},
  \end{split}
  \end{equation}
  where the first sum can be upper bounded by $H_3(\epsilon) T^{-\beta_2(\epsilon)}$ with $H_3(\epsilon)>0$ and $\beta_2(\epsilon)>0$ thanks to Lemma \ref{lem:epsilon}.
  We upper bound the second sum in Eq. \ref{eq:bayes_passage_proba} with Chernoff inequality:
  \begin{equation*}
    \begin{split}
    \sum_{s=1}^T \sum_{k=c_a^- s}^{c_a^+ s} \mathbb{P}(\hat{\mu}_{a, k}<\mu_0^-)
    &\le \sum_{s=1}^{+\infty} \sum_{k=c_a^- s}^{+\infty} e^{-k d(\mu_0^-, \mu_a)}\\
    &= \frac{e^{-c_a^- d(\mu_0^-, \mu_a)}}{\left(1-e^{-d(\mu_0^-, \mu_a)}\right)\left(1-e^{- c_a^- d(\mu_0^-, \mu_a)}\right)}.
  \end{split}
  \end{equation*}
  Finally, we upper bound the third sum in Eq. (\ref{eq:bayes_passage_proba}) by
  \begin{equation}
    \begin{split}
    &\E\left[\sum_{t=1}^{T-1} \I\{\hat{\mu}_{a, s} \ge \tau_a, a \in A_{t+1}\}\right]\\
    & \le \E\Biggl[ \sum_{t=1}^{T-1} \mathbb{I}\left\{a\in A_{t+1}\right\} \sum_{s=1}^{t} \sum_{1\le i_1 < \dots < i_s \le t} \mathbb{I}\left\{ a\in\bigcap_{i\in\{i_1, \dots, i_s\}} A_{i}, a\notin\bigcup_{i\in \{1, \dots, t\}\setminus\{i_1, \dots, i_s\}} A_{i}\right\}\\
    &\qquad \qquad \qquad \qquad \qquad \qquad \qquad \qquad \quad \times \I\left\{\hat{\mu}_{a, C_a(i_1)+\dots+C_a(i_s)} \ge \tau_a \right\}\Biggr]\\
    & \le \sum_{s=1}^{T} \sum_{k=c_a^- s}^{c_a^+s} \mathbb{P}(\hat{\mu}_{a, k} \ge \tau_a )
    \le \frac{e^{-c_a^- d(\tau_a, \mu_a)}}{\left(1-e^{-d(\tau_a, \mu_a)}\right)\left(1-e^{- c_a^- d(\tau_a, \mu_a)}\right)}.
  \end{split}
  \end{equation}
  where we respectively used Eq. (\ref{eq:argument_classique_indicatrices}) and Chernoff inequality in the two last inequalities.

  (ii). Now consider $a\in A^*$. We have,
  \begin{align}
    & \tilde{C}_a(T)-\E[N_a(T)] = \E\left[\sum_{t=1}^{T-1} C_a(t+1) \mathbb{I}\{a\notin A_{t+1}\}\right]
    = \tilde{c}_a^+ \sum_{t=1}^{T-1} \mathbb{P}(\bar{q}_a(t) < \tau_a) \nonumber \\
    &\le \tilde{c}_a^+ \left\{t_0 - 1 + \sum_{t=t_0}^{T-1} \mathbb{P}\left(\hat{\mu}_a(t) < \tau_a, N_a(t)\ge (\log t)^2\right)
    + \sum_{t=1}^{T-1} \mathbb{P}\left(\bar{q}_a(t) < \tau_a, N_a(t)\le (\log t)^2\right)\right\} \label{eq:split_opt_BayesUCB},
  \end{align}
  where $t_0=\max(t_1, t_2)$ with $t_1$ the smallest integer verifying $C^2 t_0 (\log t_0)^{2c}\ge 1$, which implies for all $t\ge t_1$ that $\bar{\mu}_a(t) \le \bar{q}_a(t)$,
  and $t_2 = \left\lceil \exp( 2/d(\tau_a, \mu_a) ) \right\rceil$ to ensure that $d(\tau_a, \mu_a)(\log t)^2 \ge 2\log t$ for all $t\ge t_2$.
  To upper bound the first sum in Eq. (\ref{eq:split_opt_BayesUCB}) we write for $t\ge t_0$,
  \begin{equation*}
    \begin{split}
    \mathbb{P}\left(\hat{\mu}_a(t) < \tau_a, N_a(t)\ge (\log t)^2\right)
    &\le \sum_{s=\lceil (\log t)^2 \rceil}^t \mathbb{P}(\hat{\mu}_{a, s} < \tau_a)
    \le \sum_{s=\lceil (\log t)^2 \rceil}^{+\infty} e^{-sd(\tau_a, \mu_a)}\\
    &\le e^{-d(\tau_a, \mu_a) (\log t)^2} \le \frac{1}{t^2}.
    \end{split}
  \end{equation*}

  To upper bound the second sum in Eq. (\ref{eq:split_opt_BayesUCB}) use again Lemma 4 in \cite{2016arXiv160101190K},
  \begin{equation*}
    \begin{split}
    &\mathbb{P}\left(\bar{q}_a(t) < \tau_a, N_a(t)\le (\log t)^2\right)
    = \mathbb{P}\left(\pi_{a, N_a(t), \bar{\mu}_a(t)}([\tau_a, \mu^+[)<\frac{1}{t(\log t)^c}, N_a(t)\le (\log t)^2\right)\\
    &\le \mathbb{P}\left( \frac{A e^{-N_a(t)d(\bar{\mu}_a(t), \tau_a)}}{N_a(t)} <\frac{1}{t(\log t)^c}, N_a(t)\le (\log t)^2\right)\\
    &= \mathbb{P}\left( N_a(t)d^+(\hat{\mu}_a(t), \tau_a) > \log\left(\frac{A t(\log t)^c}{N_a(t)}\right), N_a(t)\le (\log t)^2\right)\\
    &\le \mathbb{P}( N_a(t)d^+(\hat{\mu}_a(t), \tau_a) > \log(At) + (c-2)\log\log t),
    \end{split}
  \end{equation*}
  where $A$ is a constant depending on $\mu_0^-$, $\mu_0^+$ and on prior densities.
  Then for $c\ge 5$, using the self-normalized deviation inequality stated in Lemma 10 in \cite{2012arXiv1210.1136C},
  we have,
  \begin{equation*}
    \begin{split}
      &\mathbb{P}( N_a(t)d^+(\hat{\mu}_a(t), \tau_a) > \log(At) + (c-2)\log\log t) \le (\delta_t \log(c_a^+ t) + 1)\exp(-\delta_t + 1)\\
      &= \frac{e((\log(t))^2 + (c-2)\log(t)\log\log(t) + \log(A c_a^+)\log(t)+ (c-2)\log(c_a^+)\log\log(t) + \log(A)\log(c_a^+)+1)}{At(\log(t))^{c-2}}\\
      &\le \frac{e(2(c-2)+4)}{A t\log(t)},
    \end{split}
  \end{equation*}
  where we assumed $t\ge t_a = \max(e/A, 3, A, c_a^+, Ac_a^+)$ to ensure the last inequality and that $\delta_t=\log(At) + (c-2)\log\log(t) > 1$.
  Then by summing over $t$,
  \begin{equation*}
    \begin{split}
      \tilde{C}_a(T)-\mathbb{E}[N_a(T)]
      &\le \tilde{c}_a^+\left\{ t_a + \frac{e(2(c-2)+4)}{A}\sum_{t=3}^{T-1}\frac{1}{t\log t} \right\}\\
      &\le \tilde{c}_a^+ \{e(2(c-2)+4)\log\log T + t_a + 1\}.
    \end{split}
  \end{equation*}
%\end{proof}

\subsection{Proof of Theorem \ref{th:upper_bound_N_thompson}}

%\begin{proof}

We first introduce some notations.
Let $L(\theta)=(1/2)\min(1, \sup_x p(x | \theta))$ and for any $\delta_a > 0$,
\begin{equation*}
  E_{a, s} = \Biggl( \exists s' \in \{ 1, \dots, s \}, p(X_{a, s'} | \theta_a)\ge L(\theta_a),
  \left| \frac{\sum_{u=1, u\neq s'}^{s} X_{a, u}}{s-1} - \mu_a \right| \le \delta_a \Biggr)
\end{equation*}
is an event where there is at least one 'likely' observation of arm $a$ (namely $X_{a, s'}$) and such that the empirical sufficient statistic is close to its true mean.
We also define $E_a(t) = E_{a, N_a(t)}$.
\begin{remark}
In the definition of $E_{a, s}$, the 'likely' observation $X_{a, s'}$ is only needed for technical reasons
when the Jeffreys prior $\pi_{a, 0}$ is improper (see Remark 8 in \cite{korda2013thompson} for more details).
\end{remark}

We now recall the Theorem 4 in \cite{korda2013thompson}, an important result on the posterior concentration under the event $E_a(t)$.
\begin{lemma}
  \label{lem:posterior_concentration}
  There exists problem-dependent constants $C_{1, a}$ and $N_{1, a}$ and a function $\Delta \mapsto C_{2, a}(\Delta)$
  such that for $\delta_a \in ]0, 1[$ and $\Delta>0$ verifying $1-\delta_a C_{2, a}(\Delta) > 0$, it holds whenever $N_a(t)\ge N_{1, a}$:
  \begin{equation*}
    \mathbb{P}(\mu(\theta_a(t)) \ge \mu_a + \Delta, E_a(t) | (X_{a, s})_{1\le s\le N_a(t)})
    \le C_{1, a} N_a(t) e^{-(N_a(t)-1)(1-\delta_a C_{2, a}(\Delta))d(\mu_a, \mu_a+\Delta)}
  \end{equation*}
  and
  \begin{equation*}
    \mathbb{P}(\mu(\theta_a(t)) \le \mu_a - \Delta, E_a(t) | (X_{a, s})_{1\le s\le N_a(t)})
    \le C_{1, a} N_a(t) e^{-(N_a(t)-1)(1-\delta_a C_{2, a}(\Delta))d(\mu_a, \mu_a-\Delta)}.
  \end{equation*}
\end{lemma}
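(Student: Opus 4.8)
The plan is to work directly from the explicit form of the posterior and to prove the bound by a Laplace-type argument: the unnormalized posterior density is a sharply peaked bump of width of order $1/\sqrt{N_a(t)}$ around the empirical natural parameter, so any tail mass beyond a fixed distance decays exponentially at the corresponding Kullback--Leibler rate, up to a polynomial prefactor in $N_a(t)$; the role of $E_a(t)$ is then to convert a rate governed by the (random) empirical mean into one governed by the true mean $\mu_a$, at the price of the multiplicative slack $1-\delta_aC_{2,a}(\Delta)$.

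Write $n=N_a(t)$, let $\hat\mu=\hat\mu_{a,n}$ be the empirical mean of the $n$ observations of arm $a$ and $\hat\theta=\mu^{-1}(\hat\mu)$. The posterior density is $\pi_{a,t}(\theta)\propto\pi_{a,0}(\theta)\exp(n(\theta\hat\mu-F(\theta)))$, and using $F'(\hat\theta)=\hat\mu$ together with the closed form $K(\theta,\theta')=F(\theta')-F(\theta)-F'(\theta)(\theta'-\theta)$ one checks the identity $\theta\hat\mu-F(\theta)=(\hat\theta\hat\mu-F(\hat\theta))-K(\hat\theta,\theta)$. Substituting and cancelling the $\theta$-free factor, the conditional probability to be bounded equals
\[
\frac{\int_{\mu^{-1}(\mu_a+\Delta)}^{\theta^+}\pi_{a,0}(\theta)\,e^{-nK(\hat\theta,\theta)}\,d\theta}{\int_{\theta^-}^{\theta^+}\pi_{a,0}(\theta)\,e^{-nK(\hat\theta,\theta)}\,d\theta}.
\]
On $E_a(t)$, choosing $\delta_a$ small enough that $1-\delta_aC_{2,a}(\Delta)>0$ (which entails $\hat\mu<\mu_a+\Delta$), I would bound the numerator by splitting $e^{-nK}=e^{-(n-1)K}e^{-K}$; since $\theta\mapsto K(\hat\theta,\theta)$ is increasing for $\theta\ge\hat\theta$, on the tail $e^{-(n-1)K(\hat\theta,\theta)}\le e^{-(n-1)K(\hat\theta,\mu^{-1}(\mu_a+\Delta))}=e^{-(n-1)d(\hat\mu,\mu_a+\Delta)}$, leaving the bounded factor $\int\pi_{a,0}(\theta)e^{-K(\hat\theta,\theta)}\,d\theta$. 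For the denominator, restrict the integral to a fixed-width neighbourhood of $\hat\theta$, where $K(\hat\theta,\cdot)$ is dominated by its quadratic Taylor term $\tfrac12F''(\hat\theta)(\theta-\hat\theta)^2$; this gives a lower bound of order $1/\sqrt n$. Combining the two and absorbing $\sqrt n$ into $n$ yields a bound of the form $C_{1,a}\,n\,e^{-(n-1)d(\hat\mu,\mu_a+\Delta)}$ valid for $n\ge N_{1,a}$.

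It remains to pass from $\hat\mu$ to $\mu_a$. On $E_a(t)$ one has $|\hat\mu-\mu_a|\le\delta_a$, and since $x\mapsto d(x,\mu_a+\Delta)$ is $C^1$ with $d(\mu_a,\mu_a+\Delta)>0$, taking $C_{2,a}(\Delta)$ to be the ratio of $\sup_x|\partial_1 d(x,\mu_a+\Delta)|$ (over $x$ in a neighbourhood of $[\mu_a,\mu_a+\Delta]$) to $d(\mu_a,\mu_a+\Delta)$ gives $d(\hat\mu,\mu_a+\Delta)\ge(1-\delta_aC_{2,a}(\Delta))\,d(\mu_a,\mu_a+\Delta)$, which is exactly the claimed exponent; the lower-tail inequality follows by the same argument applied to $\theta\le\mu^{-1}(\mu_a-\Delta)$. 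The main obstacle is the uniform control of $\int\pi_{a,0}(\theta)e^{-K(\hat\theta,\theta)}\,d\theta$ when the Jeffreys prior $\pi_{a,0}\propto\sqrt{F''(\theta)}$ is improper: near the endpoints $\theta^\pm$ the prior mass may diverge, and the decay supplied by a single sufficient statistic need not tame it uniformly in $\hat\theta$. This is precisely why the event $E_{a,s}$ additionally requires a ``likely'' observation $X_{a,s'}$ with $p(X_{a,s'}\mid\theta_a)\ge L(\theta_a)$: that extra likelihood factor must be shown to dominate the prior tails, and carrying this through --- necessarily in a family-dependent way --- produces the constants $C_{1,a}$ and $N_{1,a}$ and constitutes the technical heart of the proof, recovering Theorem 4 of \cite{korda2013thompson}.
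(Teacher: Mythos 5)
The paper does not prove this lemma at all: it is stated verbatim as a recollection of Theorem~4 in \cite{korda2013thompson}, and the proof is delegated entirely to that reference. Your proposal therefore takes a genuinely different route by reconstructing the argument, and the reconstruction is the right one: writing the posterior as $\pi_{a,0}(\theta)e^{-nK(\hat\theta,\theta)}$ up to a $\theta$-free factor (your identity $\theta\hat\mu-F(\theta)=\hat\theta\hat\mu-F(\hat\theta)-K(\hat\theta,\theta)$ is correct), bounding the numerator by the monotonicity of $K(\hat\theta,\cdot)$ on the tail, lower-bounding the denominator by a Laplace approximation of order $n^{-1/2}$, and transferring the exponent from $d(\hat\mu,\mu_a+\Delta)$ to $(1-\delta_aC_{2,a}(\Delta))d(\mu_a,\mu_a+\Delta)$ via the Lipschitz constant of $x\mapsto d(x,\mu_a+\Delta)$ is exactly the architecture of the cited proof, and it explains where $C_{1,a}$, $C_{2,a}(\Delta)$ and the $(n-1)$ in the exponent come from.

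Two caveats keep this from being a complete proof. First, you flag but do not resolve the uniform control of $\int\pi_{a,0}(\theta)e^{-K(\hat\theta,\theta)}\,d\theta$ (and of the corresponding tail integral in the numerator) when the Jeffreys prior is improper; as you say, this is the technical heart, and it is handled in \cite{korda2013thompson} by peeling off the actual likelihood factor $p(X_{a,s'}\mid\theta)$ of the ``likely'' observation guaranteed by $E_{a,s}$, rather than a factor $e^{-K(\hat\theta,\theta)}$ as in your splitting --- the two are not interchangeable, since only the former is what $E_{a,s}$ lower-bounds, and the prior-domination step is family-dependent. Second, $E_{a,s}$ controls the leave-one-out mean $\bigl|\frac{1}{s-1}\sum_{u\neq s'}X_{a,u}-\mu_a\bigr|$, not $|\hat\mu-\mu_a|$; your transfer step should be phrased in terms of that statistic (which is also the cleaner way to see why $n-1$, not $n$, appears in the exponent). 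With these repairs your sketch does recover the cited theorem; as written it is an accurate outline with one acknowledged, nontrivial hole.
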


Thanks to these concentration inequalities we can derive bounds on the expected number of pulls
of any arm.

For all arms $a\in \{1, \dots, K\}$ and $t\ge 2$, $\theta_a(t)$ is a r.v. sampled from the posterior distribution $\pi_a(t)$ on $\theta_a$ obtained after $N_a(t-1)$ observations.
For all $s\ge 1$, we also denote by $\theta_{a, s}$ a r.v. sampled from the posterior distribution resulting from the first $s$ observations pulled from arm $a$ (with arbitrary choice when some of these random variables are pulled together), so that $\theta_a(t) = \theta_{a, N_a(t-1)}$.

We now prove Theorem \ref{th:upper_bound_N_thompson}.

  (i). Let $a\in \{1, \dots, K\}\setminus \mathcal{A}^*$ be a non-profitable arm (i.e. $\mu_a<\tau_a$). We upper bound the expectation of $N_a(T)$ as follows:

  \begin{equation}
    \label{eq:split_subopt_thompson}
    \mathbb{E}[N_a(T)] = \E\left[ C_a(t) \sum_{t=1}^T \I\{a\in A_t\} \right]
    \le \tilde{c}_a^+ \left\{ 1 + \sum_{t=2}^{T} \mathbb{P}\left(a\in A_{t}, E_a(t)\right) + \mathbb{P}\left(a\in A_{t}, E_a(t)^c\right)\right\}.
  \end{equation}

  First observe that the first sum in the right-hand side in Eq. (\ref{eq:split_subopt_thompson}) is equal to
  \begin{equation*}
    \begin{split}
      &\E\Biggl[ \sum_{t=2}^{T} \mathbb{I}\left\{a\in A_{t}\right\} \sum_{s=1}^{t-1} \sum_{1\le i_1 < \dots < i_s \le t-1} \mathbb{I}\left\{ a\in\bigcap_{i\in\{i_1, \dots, i_s\}} A_{i}, a\notin\bigcup_{i\in \{1, \dots, t-1\}\setminus\{i_1, \dots, i_s\}} A_{i}\right\}\\
      &\quad \times \I\left\{\mu(\theta_{a, C_a(i_1)+\dots+C_a(i_s)}) \ge \tau_a, E_{a, C_a(i_1)+\dots+C_a(i_s)}\right\}\Biggr].
    \end{split}
  \end{equation*}

  Then, given $\epsilon \in ]0, 1[$, by choosing $\delta_a \le \epsilon/C_{2, a}(|\Delta_a|)$, defining $K_T = \left\lceil \frac{1+\epsilon}{1-\epsilon}\frac{\log T}{c_a^- d(\mu_a, \tau_a)} \right\rceil$
  and observing that $\I\{\mu(\theta_{a, C_a(i_1)+\dots+C_a(i_s)}) \ge \tau_a, E_{a, C_a(i_1)+\dots+C_a(i_s)}\} \le \I\{s<K_T\} + \sum_{k=c_a^- s}^{c_a^+ s} \I\{s\ge K_T, \mu(\theta_{a, k}) \ge \tau_a, E_{a, k}\}$, we obtain

  \begin{equation*}
    \begin{split}
      \sum_{t=2}^{T} \mathbb{P}\left(a\in A_{t}, E_a(t)\right)
      &\le K_T-2 + \sum_{s=K_T}^{T} \sum_{k=c_a^- s}^{c_a^+ s} \mathbb{P}\left(\mu(\theta_{a, k}) \ge \tau_a, E_{a, k}\right)\\
      &\le K_T-2 + \sum_{s=K_T}^T \sum_{k=c_a^- s}^{c_a^+ s} C_{1, a} k e^{-(k - 1)(1-\epsilon)d(\mu_a, \tau_a)}\\
      &\le \frac{1+\epsilon}{1-\epsilon}\frac{\log T}{c_a^- d(\mu_a, \tau_a)} + C_{1, a} T (c_a^+ K_T)^2 e^{-(c_a^- K_T - 1)(1-\epsilon)d(\mu_a, \tau_a)}\\
      &\le \frac{1+\epsilon}{1-\epsilon}\frac{\log T}{c_a^- d(\mu_a, \tau_a)} + C_{1, a} e^{(1-\epsilon) d(\mu_a, \tau_a)} \frac{(c_a^+ K_T)^2}{T^\epsilon},
    \end{split}
  \end{equation*}
  where we used in the first inequality Eq. (\ref{eq:argument_classique_indicatrices}).
  In the second and third inequalities we assumed $T$ larger than $N_a(\epsilon)$ verifying $T\ge N_a(\epsilon) \Rightarrow K_T \ge max(N_{1, a}/c_a^-, N_{2, a})$
  with $N_{1, a}$ defined in Lemma \ref{lem:posterior_concentration} and $N_{2, a}$ such that the function $u \mapsto u^2 e^{-(c_a^- u-1)(1-\epsilon)d(\mu_a, \tau_a)}$
  is decreasing for $u \ge N_{2, a}$.

  In order to upper bound the second sum in the right-hand side in Eq. (\ref{eq:split_subopt_thompson}) we first introduce the following events:
  \begin{equation*}
    B_{a, s} = (\forall s' \in \{1, \dots, s\}, p(X_{a, s'}|\theta_a)\le L(\theta_a))
  \end{equation*}
  and
  \begin{equation*}
    D_{a, s} = \left(\exists s' \in \{1, \dots, s\}, \left| \frac{\sum_{u=1, u\neq s'}^s X_{a, u}}{s-1} - \mu_a \right| > \delta_a \right).
  \end{equation*}
  Then observing that $E_a(t)^c \subset B_{a, N_a(t)} \bigcup D_{a, N_a(t)}$ and it holds
  \begin{equation*}
    \begin{split}
    &\sum_{t=2}^{T} \mathbb{P}(a\in A_{t}, E_a(t)^c)\\
    &\le \E\Biggl[ \sum_{t=2}^{T} \mathbb{I}\left\{a\in A_t \right\} \sum_{s=1}^{t-1} \sum_{1\le i_1 < \dots < i_s \le t-1} \mathbb{I}\left\{ a\in\bigcap_{i\in\{i_1, \dots, i_s\}} A_{i}, a\notin\bigcup_{i\in \{1, \dots, t-1\}\setminus\{i_1, \dots, i_s\}} A_{i}\right\}\\
    &\qquad\qquad\qquad\qquad\qquad\qquad\qquad\qquad\quad \times\sum_{k=c_a^- s}^{c_a^+ s} \I\left\{B_{a, k}\right\} + \I\left\{D_{a, k}\right\}\Biggr]\\
    & \le \sum_{s=1}^T \sum_{k=c_a^- s}^{c_a^+ s} \mathbb{P}\left(B_{a, k}\right) + \mathbb{P}\left(D_{a, k}\right)\\
    &\le \sum_{s=1}^{+\infty} c_a^+ s \mathbb{P}(p(X_{a, 1} | \theta_a) < L(\theta_a))^{c_a^- s} + (c_a^+ s)^2 (e^{-(c_a^- s-1)d(\mu_a-\delta_a, \mu_a)}+e^{-(c_a^- s-1)d(\mu_a+\delta_a, \mu_a)}) < + \infty,
    \end{split}
  \end{equation*}
  where we used Eq. (\ref{eq:argument_classique_indicatrices}) in the second inequality.

  (ii). Now consider $a\in A^*$ i.e. verifying $\mu_a>\tau_a$. We have
\begin{alignat}{2}
    &\tilde{C}_a(T)-\mathbb{E}[N_a(T)] = \E\left[ \sum_{t=2}^{T} C_a(t) \I\{a \notin A_{t}\} \right]
    \le \tilde{c}_a^+ \sum_{t=2}^{T} \mathbb{P}(\mu(\theta_a(t))<\tau_a) \nonumber \\
    &\le \tilde{c}_a^+ \left\{ \sum_{t=2}^{T} \mathbb{P}(\mu(\theta_a(t))<\tau_a, E_a(t) | N_a(t)>t^b)
    + \sum_{t=2}^{T} \mathbb{P}(E_a(t)^c | N_a(t)>t^b) + \sum_{t=1}^{+\infty} \mathbb{P}(N_a(t)\le t^b) \right\} \label{eq:split_opt_thompson}.
\end{alignat}

By applying Lemma \ref{lem:posterior_concentration}, the first sum in Eq. (\ref{eq:split_opt_thompson}) is upper bounded by
\begin{equation*}
  N_{0, a}^{1/b} + \sum_{t=\left\lceil N_{0, a}^{1/b} \right\rceil}^{+\infty} C_{1, a} t^b e^{-(t^b - 1)(1-\delta_a C_{2, a}(|\Delta_a|))d(\mu_a, \tau_a)} < + \infty,
\end{equation*}
where $N_{0, a} = \max(N_{1, a}, N_{3, a})$ with $N_{3, a}$ such that the function $u \mapsto u e^{-(u-1)(1-\delta_a C_{2, a}(|\Delta_a|))d(\mu_a, \tau_a)}$
is decreasing for $u \ge N_{3, a}$.

By applying Chernoff inequality we upper bound the second sum in Eq. (\ref{eq:split_opt_thompson}) by
\begin{equation*}
  \begin{split}
    &\sum_{t=2}^{T} \mathbb{P}(E_a(t)^c | N_a(t)>t^b)
    \le \sum_{t=2}^{T} \sum_{s=\lceil t^b/c_a^+ \rceil}^t \sum_{k=c_a^- s}^{c_a^+ s} \mathbb{P}(B_{a, k}) + \mathbb{P}(D_{a, k})\\
    &\le \sum_{t=1}^{+\infty} c_a^+ t^2 \mathbb{P}(p(X_{a, 1}|\theta_a)\le L(\theta_a))^{\frac{c_a^-}{c_a^+} t^b} + 2 (c_a^+)^2 t^3 \left(e^{-\left(\frac{c_a^-}{c_a^+} t^b -1\right) d(\mu_a-\delta_a, \mu_a)} + e^{-\left(\frac{c_a^-}{c_a^+} t^b -1\right) d(\mu_a+\delta_a, \mu_a)}\right) < + \infty.
  \end{split}
\end{equation*}

Finally we upper bound the third sum in Eq. (\ref{eq:split_opt_thompson}) with the following result, inspired from Proposition 5 in \cite{korda2013thompson}.
In our case its proof is simpler as there are no dependencies between arms in the profitable bandit problem.

\begin{lemma}
  \label{lem:N_larger_tb}
  For any profitable arm $a\in A^*$ and any $b\in ]0, 1[$, there exists a problem-dependent constant $C_b < +\infty$ such that
  \begin{equation*}
    \sum_{t=1}^{+\infty} \mathbb{P}(N_a(t)\le t^b) \le C_b.
  \end{equation*}
\end{lemma}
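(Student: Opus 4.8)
The plan is to show that a profitable arm is pulled so frequently that the event $\{N_a(t) \le t^b\}$ is extremely rare, uniformly in $t$, so that the series $\sum_t \mathbb{P}(N_a(t) \le t^b)$ converges. The key structural observation — and the reason the proof is simpler here than in \cite{korda2013thompson} — is that in the profitable bandit problem the decision to pull arm $a$ depends \emph{only} on the history of arm $a$ itself: $a \in A_{t+1}$ iff $\mu(\theta_a(t)) \ge \tau_a$, with $\theta_a(t) \sim \pi_a(t)$ depending only on the $N_a(t)$ observations already collected from $a$. There is no competition with other arms, so we never need to track the growth of every arm simultaneously. First I would fix $b \in ]0,1[$ and observe that if $N_a(t) \le t^b$ then, among the $t-1$ preceding rounds, at least $t - 1 - \lceil t^b \rceil$ of them were rounds $s$ at which $a \notin A_{s+1}$, i.e. $\mu(\theta_a(s)) < \tau_a$. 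Since $t - 1 - \lceil t^b\rceil \ge t/2$ for $t$ large, the event $\{N_a(t) \le t^b\}$ is contained in an event of the form ``arm $a$ was \emph{not} pulled at a large fraction of rounds''.

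The core of the argument is a quantitative statement that whenever arm $a$ has already been observed a moderate number of times, the probability that the fresh posterior sample $\mu(\theta_a(t))$ falls below $\tau_a$ is bounded by a constant $p_a < 1$ — in fact it decays. Concretely, split on the high-probability event $E_a(t)$: on $E_a(t)$ with $N_a(t) \ge N_{1,a}$, Lemma \ref{lem:posterior_concentration} (with $\Delta$ a fixed fraction of $|\Delta_a| = \mu_a - \tau_a$ and $\delta_a$ small enough that $1 - \delta_a C_{2,a}(\Delta) > 0$) gives
\[
\mathbb{P}\big(\mu(\theta_a(t)) < \tau_a,\, E_a(t) \mid (X_{a,s})_{s\le N_a(t)}\big) \le C_{1,a} N_a(t)\, e^{-(N_a(t)-1)(1-\delta_a C_{2,a}(\Delta)) d(\mu_a,\tau_a)},
\]
and on $E_a(t)^c$ we use the Chernoff/union bounds on $B_{a,s}$ and $D_{a,s}$ already exploited in the proof of Theorem \ref{th:upper_bound_N_thompson}. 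So there is a constant $s_a$ and a summable sequence $(\gamma_s)$ with $\sum_s \gamma_s < \infty$ such that, conditionally on the past, once $N_a \ge s_a$ the conditional probability of \emph{not} pulling $a$ at the next round is at most $\gamma_{N_a}$, which in particular can be made $\le 1/2$.

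From here I would run a standard ``once you reach a threshold you stay above it'' / Borel–Cantelli type argument. Let $\sigma$ be the first time $N_a(\sigma) \ge s_a$; since $A_1 = \{1,\dots,K\}$ and each pull of $a$ adds $C_a(s) \ge c_a^- \ge 1$ observations, $\sigma$ is almost surely finite and in fact $\mathbb{P}(\sigma > m)$ decays geometrically (each of the first $m$ rounds either pulls $a$, contributing progress, or fails to — and a run of many consecutive failures before $N_a$ reaches $s_a$ is exponentially unlikely by the same per-round bound). After time $\sigma$, the per-round failure probability is $\le 1/2$, so by a coupling with i.i.d. Bernoulli$(1/2)$ draws the number of pulls of $a$ in the window $[\sigma, t]$ stochastically dominates a Binomial$\big((t-\sigma)/2,\,1/2\big)$-like quantity; hence $\mathbb{P}(N_a(t) \le t^b)$ is bounded by $\mathbb{P}(\sigma > t/4) + \mathbb{P}(\text{fewer than } t^b \text{ successes in } \ge t/4 \text{ trials each with prob} \ge 1/2)$, and both terms are summable in $t$ — the first geometrically, the second by a Chernoff bound on the binomial lower tail. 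Summing over $t$ yields the finite constant $C_b$.

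The main obstacle is the first-passage part: making rigorous that $N_a$ reaches $s_a$ quickly and then the coupling ``number of pulls after $\sigma$ dominates a binomial''. One must handle the dependence carefully — $\theta_a(t)$ depends on the actual realized samples, not just on $N_a(t)$ — but this is resolved exactly as in Theorem \ref{th:upper_bound_N_thompson}: condition on $(X_{a,s})_{s \le N_a(t)}$, apply Lemma \ref{lem:posterior_concentration} and the $E_a(t)^c$ bounds pointwise, then take expectations; the conditional per-round failure bound $\gamma_{N_a(t)}$ holds uniformly over the conditioning, which is precisely what a coupling/supermartingale argument needs. The simplification over \cite{korda2013thompson}'s Proposition 5 is that there is no need to propagate a simultaneous lower bound on the draws of \emph{all} suboptimal arms, nor to argue about the best arm being identified — a single arm's self-contained dynamics suffice.
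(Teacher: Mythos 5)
Your overall strategy is close in spirit to the paper's, but the combinatorial decomposition is different and there is one step that does not go through as written. The paper's proof is a longest-gap argument: writing $t_j$ for the round of the $j$-th pull of arm $a$ and $\xi_j=t_{j+1}-t_j-1$, the event $\{N_a(t)\le t^b\}$ forces some gap $\xi_j$ with $j\le\lfloor t^b\rfloor$ to exceed $t^{1-b}-1$; since the posterior on $\theta_a$ is frozen throughout a gap, the successive ``not pulled'' indicators inside the gap are i.i.d.\ Bernoulli given the history at its start, so $\mathbb{P}(\xi_j\ge m)\le \mathbb{P}(\mu(\theta_a(\tau_j))<\tau_a)^m$, and the Bernstein--von Mises theorem is invoked to get a uniform constant $C<1$ bounding this per-round failure probability, yielding $\sum_t (t^b+1)C^{t^{1-b}-1}<\infty$. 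You instead use a first-passage time $\sigma$ to a threshold number of observations followed by a binomial lower-tail bound on the number of pulls after $\sigma$. Both routes rest on the same structural observation (the dynamics of arm $a$ are self-contained), and your decomposition could in principle work.

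The genuine gap is in your per-round bound. For the coupling/stochastic-domination step you need an \emph{almost-sure conditional} bound $\mathbb{P}(a\notin A_{t+1}\mid\mathcal{F}_t)\le 1/2$ once $N_a(t)\ge s_a$. Lemma~\ref{lem:posterior_concentration} only controls $\mathbb{P}(\mu(\theta_a(t))<\tau_a,\,E_a(t)\mid (X_{a,s})_{s\le N_a(t)})$; the event $E_a(t)$ is a function of the realized samples, and on $E_a(t)^c$ the conditional probability of not pulling can be arbitrarily close to $1$. The bounds on $B_{a,s}$ and $D_{a,s}$ are \emph{marginal} probability bounds on $E_{a,s}^c$, not pointwise conditional ones, so they cannot be absorbed into the coupling; and if you instead try to peel off the bad event additively, $\mathbb{P}\bigl(\bigcup_{s\ge s_a}E_{a,s}^c\bigr)$ is a fixed constant that does not decay with $t$, so adding it to every term of $\sum_t\mathbb{P}(N_a(t)\le t^b)$ destroys summability. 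This is precisely the point where the paper switches tools: the uniform constant $C<1$ comes from a Bernstein--von Mises argument applied to the frozen posterior at the start of each gap, not from the posterior-concentration lemma. To complete your version you would need either that qualitative uniform bound, or a more careful interleaving of the good events with the gap structure.
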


Then, by using the Bernstein-Von-Mises theorem telling us that\\
$\lim_{j\to +\infty} \mathbb{P}(\mu(\theta_a(\tau_j))<\tau_a) = 0$,
we deduce that there exists a constant $C\in ]0, 1[$ such that for all $j\ge 0$, $\mathbb{P}(\mu(\theta_a(\tau_j))<\tau_a) \le C$.
Hence,
\begin{equation*}
  \sum_{t=1}^{+\infty} \mathbb{P}(N_a(t)\le t^b) \le \sum_{t=1}^{+\infty} (t^b + 1) C^{t^{1-b}-1} < + \infty.
\end{equation*}

%\end{proof}

\subsection{Proof of \ref{lem:N_larger_tb}}

%\begin{proof}
  In all this proof we consider a fixed profitable arm $a\in A^*$. We follow the lines of the proof of Proposition 5 in \cite{korda2013thompson} : let $t_j$ be the occurence of the $j$-th play of the arm $a$ (with $t_0=0$ by convention).
  Let $\xi_j = t_{j+1} - t_j - 1$, it corresponds to the number of time steps between the $j$-th and the $(j+1)$-th play of arm $a$. Hence, $t - N_a(t) \le \sum_{j=0}^{N_a(t)} \xi_j$ and we have
  \begin{equation*}
    \begin{split}
      \mathbb{P}(N_a(t)\le t^b) &\le \mathbb{P}(\exists j\in \{ 0, \dots, \lfloor t^b \rfloor, \xi_j \ge t^{1-b}-1 \})\\
      &\le \sum_{j=0}^{\lfloor t^b \rfloor} \mathbb{P}(\xi_j \ge t^{1-b}-1)\\
      &\le \sum_{j=0}^{\lfloor t^b \rfloor} \mathbb{P}(\mu(\theta_a(\tau_j)) < \tau_a)^{t^{1-b}-1}.
    \end{split}
  \end{equation*}
%\end{proof}

\end{document}